\newtheorem{definition}{Definition}
\newtheorem{proposition}{Proposition}
\setlist[enumerate]{nosep}
\setlist[itemize]{nosep}
\title{TNPAR: Topological Neural Poisson Auto-Regressive Model for Learning Granger Causal Structure from Event Sequences}
\author{
    %Authors
    % All authors must be in the same font size and format.
    Yuequn Liu\textsuperscript{\rm 1},
    Ruichu Cai\textsuperscript{\rm 1,2}\thanks{Corresponding author.},
    Wei Chen\textsuperscript{\rm 1},
    Jie Qiao\textsuperscript{\rm 1},
    Yuguang Yan\textsuperscript{\rm 1},
    Zijian Li\textsuperscript{\rm 1,3},
    Keli Zhang\textsuperscript{\rm 4},
    Zhifeng Hao\textsuperscript{\rm 5}
}
\title{My Publication Title --- Single Author}
\author {
    Author Name
}
\title{My Publication Title --- Multiple Authors}
\author {
    % Authors
    First Author Name\textsuperscript{\rm 1,\rm 2},
    Second Author Name\textsuperscript{\rm 2},
    Third Author Name\textsuperscript{\rm 1}
}
\begin{document}

\maketitle

\begin{abstract}
Learning Granger causality from event sequences is a challenging but essential task across various applications. Most existing methods rely on the assumption that event sequences are independent and identically distributed (i.i.d.). However, this i.i.d. assumption is often violated due to the inherent dependencies among the event sequences. Fortunately, in practice, we find these dependencies can be modeled by a topological network, suggesting a potential solution to the non-i.i.d. problem by introducing the prior topological network into Granger causal discovery. This observation prompts us to tackle two ensuing challenges: 1) how to model the event sequences while incorporating both the prior topological network and the latent Granger causal structure, and 2) how to learn the Granger causal structure. To this end, we devise a unified topological neural Poisson auto-regressive model with two processes. In the generation process, we employ a variant of the neural Poisson process to model the event sequences, considering influences from both the topological network and the Granger causal structure. In the inference process, we formulate an amortized inference algorithm to infer the latent Granger causal structure. We encapsulate these two processes within a unified likelihood function, providing an end-to-end framework for this task. Experiments on simulated and real-world data demonstrate the effectiveness of our approach.
\end{abstract}

\section{Introduction}
Causal discovery from multi-type event sequences is important in many applications. In the realm of intelligent operation and maintenance, identifying the causal structure behind alarm sequences can expedite the location of root causes \cite{vukovic2022causal}. In social analysis, recovering the causal structure behind users' behavior sequences can inform the development of effective advertising strategies \cite{chen2020mining}.

Lots of methods have been proposed for causal discovery from multi-type event sequences. They fall into two main categories: 1) constraint-based methods, which utilize independence-based tests or measures to estimate the causal structure, and 2) point process-based methods, which employ point processes to model the generation process of event sequences. While the former, including methods such as PCMCI (PC with Momentary Conditional Independence test) \cite{runge2020discovering} and transfer entropy-based methods \cite{chen2020mining,mijatovic2021information}, is founded on strict assumptions of the causal mechanism assumptions, the latter, like Hawkes process-based methods \cite{xu2016learning,zhou2013learning} and neural point process models \cite{Shchur2021Neural}, is based on the concept of Granger causality \cite{granger1969investigating}. Given the weaker assumption of Granger causality, identifying the Granger causal structure just by assessing if a sequence is predictive of another, point process-based methods often find preference in real-world applications \cite{shojaie2022granger} and are therefore the focal point of this work.  

\begin{figure}[t]
    \centering
    \includegraphics[width=0.47\textwidth]{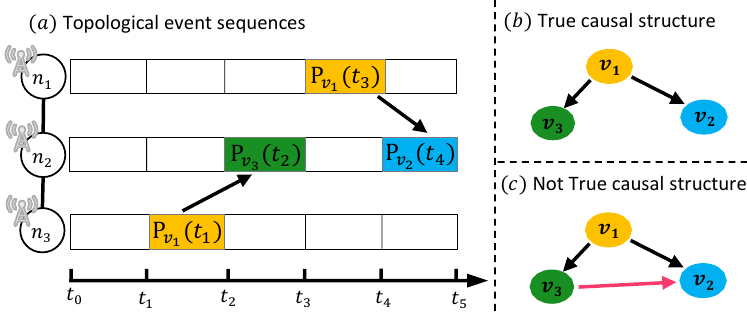}
    \caption{An example of the topological event sequences generated by a mobile network in an operation and maintenance scenario. In this context, $\{n_1, n_2, n_3\}$ represent the network elements, and $\{v_1,v_2,v_3\}$ denote the event types of the alarms. The term $\mathrm{P}_{v_1}(t_1)$ stands for the distribution of $v_1$ alarm at $t_1$ timestamp, and so on. Black arrows represent the correct causal edges, whereas the red arrow represents an incorrect edge.
    (a) Generation process of topological event sequences. The event distribution varies with time and is influenced by both a topological network (connected by the solid lines) and a Granger causal structure (depicted in part (b)).
    (b) Ground truth of the Granger causal structure.
    (c) Granger causal structure learned under i.i.d. assumption.
    }
    \label{fig: Illustrated example for mobile network}
\end{figure}

However, these existing point process-based methods are significantly dependent on the i.i.d. assumption to model the generation process of event sequences. In reality, this assumption often falls short due to the complex dependencies that permeate sequences. Consequently, the performance of these methods often leaves much to be desired in real-world applications. For instance, as illustrated in Fig. \ref{fig: Illustrated example for mobile network}, non-i.i.d. alarm sequences in an operation and maintenance scenario highlight this issue. These event sequences are produced by network elements interconnected by a topological network, indicating the event sequences of different elements are dependent on each other, thereby violating the i.i.d. assumption. In this illustrated scenario, the event type $v_2$ in $n_2$ is only caused by the event types in topologically connected elements (specifically, $v_1$ in $n_2$). Existing methods would incorrectly identify the edge $v_3 \rightarrow v_2$ because they erroneously treat event sequences on different nodes as independent. By considering the prior topological network, we can correctly learn that $v_2$ is actually caused by $v_1$ and discern the true causal structure. Therefore, it is vital to exploit the topological network to model the dependencies among non-i.i.d. event sequences and accurately recover the causal structure from them.

This gives rise to two ensuing challenges: 1) how to explicitly model the generation process of topological event sequences, incorporating both the prior topological network and the latent causal structure, and 2) how to learn the latent Granger causal structure from topological event sequences. To tackle these challenges, Cai et al. \cite{cai2022thps} propose the topological Hawkes process (THP), which extends the time-domain convolution property of the Hawkes process to the time-graph domain. Nevertheless, THP's applicability in real-world scenarios remains limited due to the complex generation process and distribution of real-world scenarios, which are often too intricate to be accurately represented by the Hawkes process. 
%Consequently, Hawkes process-based methods often suffer from oversimplified generation functions and specified distributions. 
Hence, it is crucial to devise a distribution-free generation model for non-i.i.d. event sequences that can incorporate both the topological network and the latent Granger causal structure. 

In response to these challenges, we develop a unified Topological Neural Poisson Auto-Regressive (TNPAR) model, comprising a generation process and an inference process. In the generation process, we introduce a variant of the Poisson Auto-Regressive model to depict the generation process of topological event sequences. By extending the Auto-Regressive model with Poisson Process and incorporating both the topological network and the Granger causal structure, TNPAR effectively overcomes the non-i.i.d. problem. By representing the distribution using a series of Poisson processes with varying parameters, TNPAR displays enhanced flexibility compared to existing methods that employ a single fixed distribution for the entire sequence. In the inference process, we design an amortized inference algorithm \cite{zhang2018advances,gershman2014amortized} to effectively learn the Granger causal structure. By integrating these generation and inference processes, we establish a unified likelihood function for TNPAR model, which can be optimized within an end-to-end paradigm.

In summary, our main contributions are as follows:
\begin{enumerate}
    \item We develop a topological neural Poisson auto-regressive method that accurately learns causal structure from topological event sequences.
    \item By incorporating the prior topological network into the generation model, we offer a distribution-free solution to the non-i.i.d. challenge in causal discovery.
    \item By treating the causal structure as a latent variable, we devise an amortized inference method to effectively identify the causal structure among the event types.
\end{enumerate}

\section{Related work}
% This section presents the related work on Granger causal discovery from event sequences. We have also provided a comparison of our work with related methods in Appendix A.

\subsection{Temporal Point Process}
Temporal point processes are stochastic processes utilized to model event sequences. They can be bifurcated into two main categories: statistical point processes and neural point processes. On the one hand, statistical point processes emphasize designing appropriate intensity functions, the parameters of which often carry specific physical interpretations. Notable examples of statistical point processes encompass the Poisson process \cite{cox1955some}, Hawkes process \cite{hawkes1971spectra}, reactive point process \cite{ertekin2015reactive}, and self-correcting process \cite{isham1979self}, among others. On the other hand, neural point processes \cite{Shchur2021Neural} exploit the potent learning capabilities of neural networks to implement the intensity functions, often outperforming statistical point process methods in prediction performance. %These methods are typically predicated on recurrent neural networks and their variants.

\subsection{Granger Causality for Event Sequences}
There are many types of methods for Granger causal discovery from event sequences. For instance, the Hawkes-process-based methods assume that the past events stimulate the occurrence of related events in the future if and only if the former Granger caused the latter. Typical Hawkes process-based methods \cite{zhou2013learning,xu2016learning} focus on designing appropriate intensity functions and regularized techniques. Recently, the THP algorithm \cite{cai2022thps} extends the Hawkes process to address the non-i.i.d. issue.% by employing a topological-temporal convolutional kernel function to model the interdependencies among event sequences. 
Other methods like Graphical Event Models (GEMs) \cite{bhattacharjya2018proximal} formalize Granger causality using process independence. In contrast, neural point process-based methods \cite{xiao2019learning,zhang2020cause} do not depend on specific assumptions of generation functions like the Hawkes process or GEMs. Instead, they employ neural networks to model event sequences with Granger causality. In another vein, our work is also related to the Granger causal discovery from time series. For instance, Brillinger \cite{brillinger1994time} aggregates event sequences into time series, which enables the analysis of event sequences using auto-regressive models. The Granger Causality alignment (GCA) model \cite{li2021transferable} combines the auto-regressive model with the generation model to learn the Granger causality. Amortized Causal Discovery (ACD) \cite{lowe2022amortized} trains an amortized model to infer causal graphs from time series. In addition, we provide a comparison of our work with related methods in Appendix A.

\section{Granger Causal Discovery Using Neural Poisson Auto-Regressive}

\subsection{Problem Definition}

Let an undirected graph $\mathcal{G}_N(\mathbf{N}, \mathbf{E}_N)$ represent the topological network among nodes $\mathbf{N}$, and a directed graph $\mathcal{G}_V(\mathbf{V}, \mathbf{E}_V)$ represent the Granger causal graph among event types $\mathbf{V}$. Here, $\mathbf{E}_N$ signifies the physically connected edges between nodes, and $\mathbf{E}_V$ denotes the causal edges between event types. In this context, an event could trigger its subsequent events in its own node and its topologically connected nodes. A series of \emph{topological event sequences} of length $m$, symbolized as $\mathbf{X}=\{(v_i, n_i, t_i) | i \in\{1, \ldots, m\}\}$, are generated by a Granger causal structure $\mathcal{G}_V(\mathbf{V}, \mathbf{E}_V)$ within a topological network $\mathcal{G}_N(\mathbf{N}, \mathbf{E}_N)$. Each item in the series consists of $v_i \in \mathbf{V}$ representing the event type, $n_i \in \mathbf{N}$ representing the topological node, and $t_i  \in [0,T]$ representing the occurrence timestamp. 

We can consider the topological event sequences $\mathbf{X}$ as a set of counting processes $\{\mathrm{C}_{v_i, n_j}(t)| v_i \in \mathbf{V}, n_j \in \mathbf{N}, t \in [0,T]\}$, where $\mathrm{C}_{v_i, n_j}(t)$ denotes the count of event type $v_i$ that has occurred up to $t$ at $n_j$. In this work, we divide the continuous interval $[0,T]$ into $\lceil T/\Delta \rceil$ small intervals, where $\Delta \in \mathbb{R}^+$ can be determined according to real-world applications. Then, the occurrence numbers of the counting processes can be denoted as $\mathbf{O}^{V,N}=\{O_t^{v_i, n_j} | t \in \{1,\ldots, \lceil T / \Delta \rceil \}, i \in \{1,\ldots, |\mathbf{V}| \}, j \in \{1,\ldots, |\mathbf{N}| \}\}$, where $O_t^{v_i, n_j}=\mathrm{C}_{v_i, n_j}(t \times \Delta) - \mathrm{C}_{v_i, n_j}((t-1) \times \Delta)$ represents the occurrence number of $v_i$ at $n_j$ within $((t-1) \times \Delta, t \times \Delta]$. Within each interval, we assume the counting process follows a Poisson process \cite{stoyan2013stochastic} with parameter $\lambda^{v_i, n_j}_t$. Thus, the probability of $O^{v_i, n_j}_t$ can be expressed as follows:
\begin{equation}
    \mathrm{P}(O^{v_i, n_j}_t \!=\!o) \!= \frac{(\lambda^{v_i, n_j}_t \Delta)^o}{o!} e^{-\lambda^{v_i, n_j}_t \Delta}; o=0, 1, ...
\label{eq: poisson process}
\end{equation}

Consequently, we can formulate the problem of causal discovery from topological event sequences as:

\begin{definition}[Granger causal discovery from topological event sequences]
\label{def: Granger causal discovery for event sequence generated discretely under topological network}
Given the occurrence numbers of topological event sequences, $\mathbf{O}^{V,N}=\{O_t^{v_i, n_j} | t \in \{1,\ldots, \lceil T / \Delta \rceil \}, i \in \{1,\ldots, |\mathbf{V}| \}, j \in \{1,\ldots, |\mathbf{N}| \}\}$, and the prior topological network $\mathcal{G}_N(\mathbf{N}, \mathbf{E}_N)$, the goal of Granger causal discovery from topological event sequences is to infer the Granger causal structure $\mathcal{G}_V(\mathbf{V}, \mathbf{E}_V)$ among the event types.
\end{definition}

In the following, we propose the TNPAR model to solve the above problem, comprising a generation process and an inference process.

\subsection{Generation Process of the Event Sequences via Topological Neural Poisson Auto-Regressive Model}
\begin{figure}
    \centering
    \includegraphics[width=0.35\textwidth]{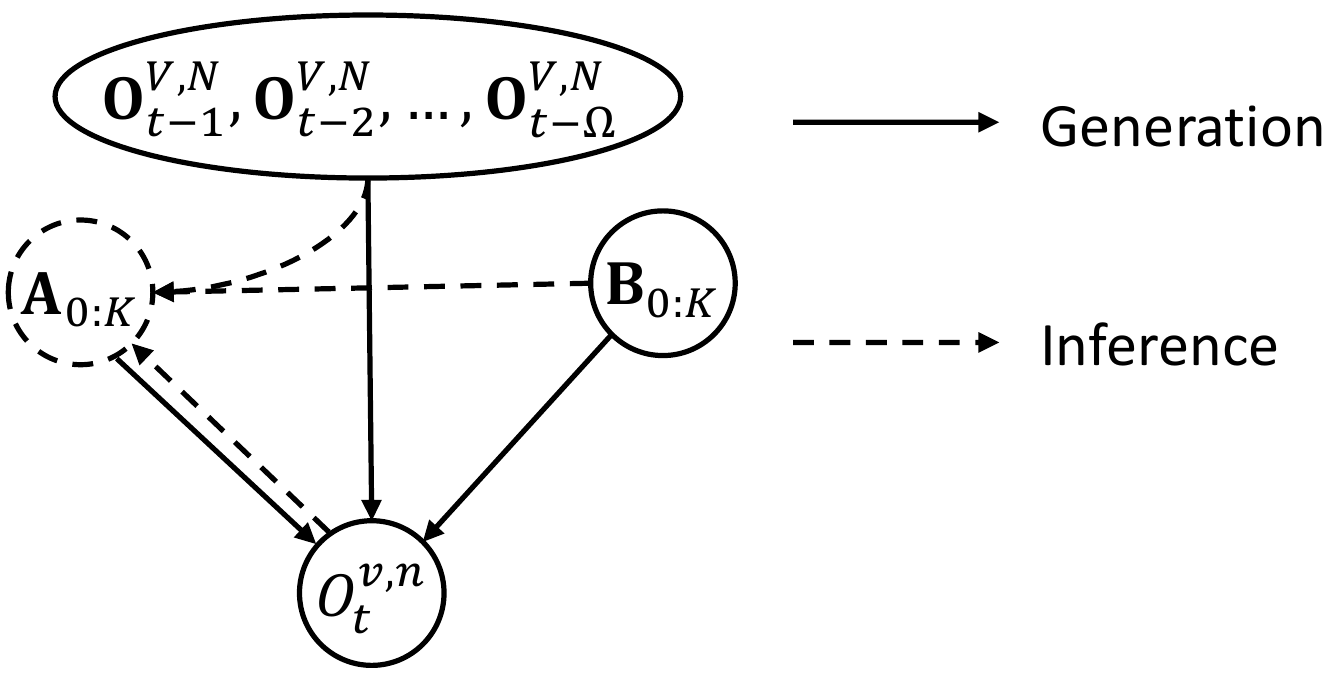}
    \caption{An illustration of the generation and inference processes for TNPAR. In this figure, $\mathbf{A}_{0:K}$ represents the causal matrices of $\mathcal{G}_V$, and $\mathbf{B}_{0:K}$ represents the topological matrices of $\mathcal{G}_N$. Solid lines signify the generation process for the data $O_{t}^{v,n}$, and dashed lines correspond to the inference process for the causal matrices $\mathbf{A}_{0:K}$. Observed variables are denoted by the solid circles, with the latent variables represented by the dashed circles.}
    \label{fig: generation example}
\end{figure}

TNPAR's generation process is depicted using solid lines in Fig. \ref{fig: generation example}. In this process, $O_t^{v_i,n_j}$ is determined by a combination of historical data $\{\mathbf{O}^{V,N}_{t-1}, \mathbf{O}^{V,N}_{t-2},\dots, \mathbf{O}^{V,N}_{t-\Omega}\}$, the causal matrices $\mathbf{A}_{0:K}$, and the topological matrices $\mathbf{B}_{0:K}$. Here, $\mathbf{A}_{0:K}$ represents a set of matrices $\{\mathbf{A}_0, \mathbf{A}_1, \dots, \mathbf{A}_K\}$, where $\mathbf{A}_k$ is a $|\mathbf{V}| \times |\mathbf{V}|$ binary matrix that indicates the Granger causality between event types at a geodesic distance \cite{bouttier2003geodesic} of $k$ within $\mathcal{G}_N$. Let $A_{k}^{i, j}$ denote the element in row $i$ and column $j$ of $\mathbf{A}_k$. Then, if $A_{k}^{i, j} = 0$, it implies that, at a geodesic distance of $k$, event type $v_i$ does not have a Granger causality with $v_j$. Otherwise, $A_k^{i,j}$ will be $1$. Similarly, $\mathbf{B}_{0:K}$ corresponds to another set of matrices $\{\mathbf{B}_0, \mathbf{B}_1,\dots, \mathbf{B}_K\}$, each of which $\mathbf{B}_k$ is a $|\mathbf{N}| \times |\mathbf{N}|$ binary matrix that indicates the physical connections of nodes at a geodesic distance of $k$. Let $B_{k}^{i, j}$ denote the element in row $i$ and column $j$ of $\mathbf{B}_k$. If the geodesic distance between node $n_i$ and $n_j$ is $k$ in $\mathcal{G}_N$, then $B_k^{i,j} = 1$. Otherwise, $B_k^{i,j}$ will be set to $0$.

In this work, we devise a model, combining the neural auto-regressive model and the Poisson process, to describe the above generation process. To initiate, we introduce a traditional neural auto-regressive model with $\Omega$ time lags:
\begin{equation}
    O_t^{v_i} = \mathrm{g}(\mathbf{O}^V_{t-1 : t-\Omega}) + \epsilon_t^{v_i},
    \label{eq:neural_AR}
\end{equation}
where $\mathbf{O}^V_{t-1 : t-\Omega} = \{\mathbf{O}^V_{t-1}, \mathbf{O}^V_{t-2},\dots, \mathbf{O}^V_{t-\Omega}\}$ represents a set of historical data, $\mathrm{g}(\cdot)$ is a nonlinear function implemented by a neural network that generates data in the $t$-th time interval based on the historical data, and $\epsilon_t^{v_i}$ denotes the noise term for event type $v_i$ in the $t$-th time interval. 
In order to incorporate the topological relationships of nodes into our model, we extend $\mathbf{O}^V_{t-1 : t-\Omega}$ to include the topologically connected historical data, using the function $\mathrm{f}^{\mathcal{G}_N}_{ n_j}(\mathbf{O}^{V,N}_{t-1 : t-\Omega}, \mathbf{B}_{0:K})$. This function applies a filter to set the non-topologically connected data for node $n_j$ to $0$ in $\mathbf{O}^{V,N}_{t-1 : t-\Omega}$ based on $\mathbf{B}_{0:K}$, which effectively removes the non-topologically connected information in the generation process of $O_t^{v_i}$. Building upon this, our auto-regressive model, incorporating topological information, can be expressed as follows:
\begin{equation}
    O_t^{v_i,n_j} = \mathrm{g}(\mathrm{f}^{\mathcal{G}_N}_{ n_j}(\mathbf{O}^{V,N}_{t-1 : t-\Omega}, \mathbf{B}_{0:K})) + \epsilon_t^{v_i, n_j}.
    \label{eq:NAR}
\end{equation}

Furthermore, to incorporate the Granger causal structure into the above model, we substitute $\mathbf{O}^{V,N}_{t-1 : t-\Omega}$ in Eq. (\ref{eq:NAR}) with a function $\mathrm{f}^{\mathcal{G}_V}_{v_i}(\mathbf{O}^{V,N}_{t-1 : t-\Omega}, \mathbf{A}_{0:K}\!)$. This function acts as a filter to set the historical data that do not have a Granger causality with $v_i$ to $0$ in $\mathbf{O}^{V,N}_{t-1 : t-\Omega}$ based on $\mathbf{A}_{0:K}$. By incorporating both the topological network and the Granger causal structure, we propose the following model for generating topological event sequences:
\begin{equation}
\label{eq: TNPAR}
O_t^{v_i,n_j} \!\!=\!\mathrm{g}(\mathrm{f}^{\mathcal{G}_N}_{n_j}\!(\mathrm{f}^{\mathcal{G}_V}_{v_i}(\mathbf{O}^{V,N}_{t-1 : t-\Omega}, \mathbf{A}_{0:K}\!), \mathbf{B}_{0:K})) \!+ \epsilon_t^{v_i, n_j}.
\end{equation}

\subsection{Inference Process of the Granger Causal Structure via Amortized Inference}
The inference process of TNPAR is depicted using dashed lines in Fig. \ref{fig: generation example}. As shown in the diagram, the causal matrices $\mathbf{A}_{0:K}$ are inferred based on a combination of the current occurrence number $O_t^{v_i,n_j}$, historical occurrence numbers $\mathbf{O}^{V,N}_{t-1 : t-\Omega}$ and the topological network $\mathbf{B}_{0:K}$. Note that the causal relationship is invariant across the varying samples. This nature inspires us to implement the inference process with amortized inference \cite{zhang2018advances,gershman2014amortized}, in which a shared function is employed to predict the varying posterior distribution of each sample. Consequently, we consider the generation process function, defined in Eq. (\ref{eq: TNPAR}), as the shared function and designate the causal matrices as the function's parameter to implement the amortized inference process.

%To elaborate, amortized inference serves as an effective variant of variational inference, replacing optimizing variational parameters for each input sample with a shared function that predicts the posterior distribution. Consequently, we consider the generation process function, defined in Eq. (\ref{eq: TNPAR}), as the shared function and designate the causal matrices as the function's parameter to implement the amortized inference process.

In more detail, to amortize the inference of the shared function, we optimize an inference model to predict the distribution of causal matrices. Given the data $O_t^{v_i,n_j}$ and $\mathbf{O}^{V,N}_{t-1 : t-\Omega}$, as well as the topological network $\mathbf{B}_{0:K}$, we use $\mathrm{q}_{\phi}(\mathbf{A}_{0:K}|O_t^{v_i,n_j},\mathbf{O}^{V,N}_{t-1 : t-\Omega},\mathbf{B}_{0:K})$ to approximate the true distribution $\mathrm{p}_{\theta}(\mathbf{A}_{0:K}|O_t^{v_i,n_j},\mathbf{O}^{V,N}_{t-1 : t-\Omega},\mathbf{B}_{0:K})$. Here, $\phi$ and $\theta$ represent the variational and generation parameters, respectively. The value of $\theta$ and $\phi$ can be learned by maximizing the ensuing likelihood function for each $O_t^{v_i,n_j}$:
\begin{equation}
\begin{aligned}
    & \log \mathrm{P}(O_t^{v_i,n_j}|\mathbf{O}^{V,N}_{t-1 : t-\Omega}, \mathbf{B}_{0:K}) & \\
    = & \mathrm{D}_{KL}(\mathrm{q}^A_{\phi}||\mathrm{p}^A_{\theta}) \!+\! \mathcal{L}(\theta,\phi;O_t^{v_i,n_j}|\mathbf{O}^{V,N}_{t-1 : t-\Omega}, \mathbf{B}_{0:K}),
\end{aligned}
\label{eq: logarithm of joint likelihood}
\end{equation}
where $\mathrm{q}^A_{\phi} = \mathrm{q}_{\phi}(\mathbf{A}_{0:K}|O_t^{v_i,n_j},\mathbf{O}^{V,N}_{t-1 : t-\Omega},\mathbf{B}_{0:K})$, $\mathrm{p}^A_{\theta} = \mathrm{p}_{\theta}(\mathbf{A}_{0:K}|O_t^{v_i,n_j},\mathbf{O}^{V,N}_{t-1 : t-\Omega},\mathbf{B}_{0:K})$. In Eq. (\ref{eq: logarithm of joint likelihood}), the right-hand side's (RHS) first term represents the Kullback-Leibler (KL) divergence of the approximate posterior from the true posterior, and the RHS's second term is the variational evidence lower bound (ELBO) on the log-likelihood of $O_t^{v_i,n_j}$. By defining the ELBO as $\mathcal{L}_{e}$ and considering the KL divergence is non-negative, we can deduce from the Eq. (\ref{eq: logarithm of joint likelihood}) that:\begin{equation}
\begin{aligned}
    & \log \mathrm{P}(O_t^{v_i,n_j}|\mathbf{O}^{V,N}_{t-1 : t-\Omega}, \mathbf{B}_{0:K}) \geq \mathcal{L}_{e} \\
    = & \mathbb{E}_{\mathrm{q}^A_{\phi}} [-\log \mathrm{q}^A_{\phi} + \log \mathrm{p}_{\theta}(O_t^{v_i,n_j}, \mathbf{A}_{0:K}|\mathbf{O}^{V,N}_{t-1 : t-\Omega},\mathbf{B}_{0:K})].
\end{aligned}
\end{equation}

The ELBO can be further expressed as: 
\begin{equation}
\begin{aligned}
\mathcal{L}_{e} = & \mathbb{E}_{\mathrm{q}^A_{\phi}} [\log \mathrm{p}_{\theta}(O_t^{v_i,n_j}| \mathbf{A}_{0:K}, \mathbf{O}^{V,N}_{t-1 : t-\Omega},\mathbf{B}_{0:K})]
 \\ & - \mathrm{D}_{KL}(\mathrm{q}^A_{\phi}||\mathrm{p}_{\theta}(\mathbf{A}_{0:K})).
\end{aligned}
\label{eq: elbo}
\end{equation}

Building upon this equation, the goal of the inference task is to optimize $\mathcal{L}_{e}$ with respect to both $\phi$ and $\theta$, thereby obtaining the posterior distribution of latent Granger causal structure given the data and the topological network.

\subsection{Theoretical Analysis}
In this subsection, we will theoretically analyze the identifiability of the proposed model. To do so, according to the definition in \cite{tank2018neural}, we begin with the definition of the Granger non-causality of multi-type event sequences as follows:

\begin{definition}[Granger non-causality of multi-type event sequences] 
Given data $\mathbf{O}^{V}$ during time interval $t \in \{1,\dots, \lceil T/\Delta \rceil \}$, which is generated according to Eq. (\ref{eq:neural_AR}). For the original sample values of historical variables $\mathbf{O}^{v_1}_{t-1 : t-\Omega}, ..., \mathbf{O}^{v_{|V|}}_{t-1 : t-\Omega}$, let $\mathbf{\hat{O}}^{v_i}_{t-1 : t-\Omega}$ represent the different sample values of the historical variables $\mathbf{O}^{v_i}_{t-1 : t-\Omega}$, (i.e., $\mathbf{O}^{v_i}_{t-1 : t-\Omega} \neq \mathbf{\hat{O}}^{v_i}_{t-1 : t-\Omega}$). We can determine the Granger non-causality of event type $v_i$ with respect to event type $v_j$, if the following condition holds for all $t$:
% For all $\mathbf{O}^{v_1}_{t-1 : t-\Omega}, ..., \mathbf{O}^{v_{|V|}}_{t-1 : t-\Omega}$ and $\mathbf{O}^{v_i}_{t-1 : t-\Omega} \neq \mathbf{O}^{v_i^{'}}_{t-1 : t-\Omega}$, if the following condition holds:
\begin{equation*}
\begin{aligned}
&\mathrm{g}(\mathbf{O}^{v_1}_{t-1 : t-\Omega}, ..., \mathbf{O}^{v_i}_{t-1 : t-\Omega} ,..., \mathbf{O}^{v_{|V|}}_{t-1 : t-\Omega}) \\
= &\mathrm{g}(\mathbf{O}^{v_1}_{t-1 : t-\Omega}, ..., \mathbf{\hat{O}}^{v_i}_{t-1 : t-\Omega} ,..., \mathbf{O}^{v_{|V|}}_{t-1 : t-\Omega}).  \\
\end{aligned}
\end{equation*}

That is, $O_t^{v_j}$ is invariant to $\mathbf{O}^{v_i}_{t-1 : t-\Omega}$ with $\mathrm{g}(\cdot)$.
\end{definition}

By extending the above definition to the topological event sequences, we define the notion of Granger non-causality of topological event sequences as follows:

\begin{definition}[Granger non-causality of topological event sequences]
Assume that the occurrence numbers of event sequences are generated by Eq. (\ref{eq: TNPAR}). For the original sample values of historical variables  $\mathbf{O}^{v_1, n_1}_{t-1 : t-\Omega}, ..., \mathbf{O}^{v_{|V|}, n_{|N|}}_{t-1 : t-\Omega}$, let $\mathbf{\hat{O}}^{v_i, n_k}_{t-1 : t-\Omega}$ represent the different sample values of the historical variables $\mathbf{O}^{v_i, n_k}_{t-1 : t-\Omega}$, (i.e., $\mathbf{\hat{O}}^{v_i, n_k}_{t-1 : t-\Omega} \neq \mathbf{O}^{v_i, n_k}_{t-1 : t-\Omega}$). We can determine the Granger non-causality of event type $v_i$ with respect to event type $v_j$ under topological network $\mathcal{G}_N$, if the following condition holds for all $t$:
% We can determine the Granger non-causality of event type $v_i$ with respect to event type $v_j$ as follows: For all $\mathbf{O}^{v_1, n_1}_{t-1 : t-\Omega}, ..., \mathbf{O}^{v_{|V|}, n_{|N|}}_{t-1 : t-\Omega}$ and $\mathbf{O}^{v_i, n_k}_{t-1 : t-\Omega} \neq \mathbf{O}^{v_i^{'}, n_k}_{t-1 : t-\Omega}, k \in \{1, ..., |N|\}$, if the following condition holds in all nodes under topological network $\mathcal{G}_N$:
\begin{equation*}
\begin{aligned}
& \mathrm{g}(\mathbf{O}^{v_1, n_1}_{t-1 : t-\Omega}, ..., \mathbf{O}^{v_i, n_1}_{t-1 : t-\Omega} ,..., \mathbf{O}^{v_i, n_{|N|}}_{t-1 : t-\Omega} ,..., \mathbf{O}^{v_{|V|}, n_{|N|}}_{t-1 : t-\Omega})\\
= & \mathrm{g}(\mathbf{O}^{v_1, n_1}_{t-1 : t-\Omega}, ..., \mathbf{\hat{O}}^{v_i, n_1}_{t-1 : t-\Omega} ,..., \mathbf{\hat{O}}^{v_i, n_{|N|}}_{t-1 : t-\Omega} ,..., \mathbf{O}^{v_{|V|}, n_{|N|}}_{t-1 : t-\Omega}).
\end{aligned}
\end{equation*}

That is, $O_t^{v_j, n_k}$ is invariant to $\mathbf{O}^{v_i, n_1}_{t-1 : t-\Omega} ,..., \mathbf{O}^{v_i, n_{|N|}}_{t-1 : t-\Omega}$ with $\mathrm{g}(\cdot)$ across all nodes in topological network $\mathcal{G}_N$.
\label{definition: topological granger causality}
\end{definition}

Definition \ref{definition: topological granger causality} implies that if two event types are Granger non-causality among a topological network, then they are Granger non-causality across all nodes. Based on this definition, we can derive a proposition about the identification of the model as follows.

\begin{proposition}
Given the Granger causal structure $\mathcal{G}_V(\mathbf{V}, \mathbf{E}_V)$, the topological network $\mathcal{G}_N(\mathbf{N}, \mathbf{E}_N)$ and the max geodesic distance $K$, along with the assumption that the data generation process adheres to Eq. (\ref{eq: TNPAR}), we can deduce that $v_i \rightarrow v_j \notin \mathbf{E}_V$, if and only if $A_{k}^{v_i, v_j} = 0$ for every  $k \in \{0,...,K\}$.
\label{proposition: TNPAR}
\end{proposition}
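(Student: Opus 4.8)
The plan is to prove both directions by unfolding the generation equation~(\ref{eq: TNPAR}) and matching its dependence structure against Definition~\ref{definition: topological granger causality}. The central observation I would establish first is that the \emph{only} channel through which the history of $v_i$ can enter the generation of $O_t^{v_j, n_k}$ is the causal filter $\mathrm{f}^{\mathcal{G}_V}_{v_j}(\cdot,\mathbf{A}_{0:K})$ in Eq.~(\ref{eq: TNPAR}), and that this filter retains the $v_i$-history lying at geodesic distance $k$ precisely when $A_k^{v_i,v_j}=1$. Since $K$ is the maximal geodesic distance, the matrices $\mathbf{A}_0,\ldots,\mathbf{A}_K$ jointly account for every node of $\mathcal{G}_N$ relative to $n_k$, so restricting attention to distances $0$ through $K$ loses nothing.

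For the ``if'' direction I would assume $A_k^{v_i,v_j}=0$ for every $k\in\{0,\ldots,K\}$. Applied to any target node $n_k$, the filter $\mathrm{f}^{\mathcal{G}_V}_{v_j}$ then sets every $v_i$-history entry---at every geodesic distance, hence at every node of $\mathcal{G}_N$---to $0$ before it reaches $\mathrm{f}^{\mathcal{G}_N}_{n_k}$ and $\mathrm{g}$. Consequently the post-filter argument of $\mathrm{g}$ does not contain the variables $\mathbf{O}^{v_i,n_1}_{t-1:t-\Omega},\ldots,\mathbf{O}^{v_i,n_{|N|}}_{t-1:t-\Omega}$ at all, so $O_t^{v_j,n_k}$ is literally invariant to any change in those variables. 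This is exactly the invariance demanded by Definition~\ref{definition: topological granger causality}, yielding Granger non-causality of $v_i$ with respect to $v_j$, i.e. $v_i\to v_j\notin\mathbf{E}_V$.

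For the ``only if'' direction I would argue by contraposition: suppose $A_{k_0}^{v_i,v_j}=1$ for some $k_0$. I would exhibit two histories that agree everywhere except on the $v_i$-entries sitting at the distance-$k_0$ nodes of some target $n_k$, chosen so that $\mathrm{f}^{\mathcal{G}_V}_{v_j}$ passes these entries unaltered into $\mathrm{g}$. Invoking the assumption that $\mathrm{g}$ genuinely responds to the coordinates the filters admit, the two histories produce different outputs $O_t^{v_j,n_k}$, so $O_t^{v_j,n_k}$ is \emph{not} invariant to the $v_i$-history across all nodes; by Definition~\ref{definition: topological granger causality} this gives $v_i\to v_j\in\mathbf{E}_V$, the contrapositive of the claim.

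The hard part will be this last step: converting the presence of a $1$ in some $\mathbf{A}_{k_0}$ into an observable change in the output of a \emph{nonlinear} $\mathrm{g}$. A nonlinear map could in principle ignore an admitted coordinate, or cancellations across different geodesic distances could mask the effect when several blocks of history are perturbed simultaneously as in the definition. I would control both issues by (i) relying on a non-degeneracy (faithfulness) assumption so that each filter-admitted coordinate is an active argument of $\mathrm{g}$, and (ii) perturbing only the single distance-$k_0$ block of $v_i$-history while freezing all remaining $v_i$-entries, thereby specializing the definition's simultaneous perturbation to an isolated distance-$k_0$ perturbation that no cross-distance cancellation can hide. The remaining, purely routine, task is the bookkeeping that verifies the composition $\mathrm{f}^{\mathcal{G}_N}_{n_k}\circ \mathrm{f}^{\mathcal{G}_V}_{v_j}$ routes exactly the distance-$k_0$ $v_i$-block through when $A_{k_0}^{v_i,v_j}=1$ and nothing else, which is what makes the isolation valid.
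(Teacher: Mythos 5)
Your proposal follows essentially the same route as the paper's own proof: the ``if'' direction by observing that when $A_{k}^{v_i, v_j} = 0$ for all $k$ the filter $\mathrm{f}^{\mathcal{G}_V}$ zeroes out every piece of $v_i$-history before it reaches $\mathrm{g}$, so the output is literally invariant, and the ``only if'' direction by contraposition from a nonzero $A_{k_0}^{v_i, v_j}$. The one substantive difference is that you explicitly identify and patch---via a faithfulness/non-degeneracy assumption and by isolating the perturbation to the single distance-$k_0$ block---the step the paper glosses over with the phrase ``potentially altering its output''; on that point your treatment is actually the more careful of the two.
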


The proof of Proposition \ref{proposition: TNPAR} is provided in Appendix B. This proposition inspires a methodology for TNPAR to identify Granger causality, which involves evaluating whether the elements of  $\mathbf{A}_{0:K}$ are zero.

\section{Algorithm and Implementation}

\subsection{Inference Process of TNPAR} 
In this subsection, we employ an encoder to implement the above inference process of TNPAR. Here, the $O_t^{v_i,n_j}$ and $\mathbf{O}^{V,N}_{t-1 : t-\Omega}$ are used jointly as inputs to the encoder, with $\mathbf{B}_{0:K}$ serving as a mask. The output is the variational posterior of the Granger causal structure, represented as $\mathrm{q}_{\phi}(\mathbf{A}_{0:K}|O_t^{v_i,n_j},\mathbf{O}^{V,N}_{t-1 : t-\Omega},\mathbf{B}_{0:K})$.

To elaborate, each element within $\mathbf{O}^{V,N}_{t-1 : t-\Omega}$ is initially filtered in accordance with both $\mathbf{B}_{0:K}$ and $n_j$. The resulting values are then combined with $O_t^{v_i,n_j}$ and converted into a vector, where each item denotes the representation of the occurrence numbers at a geodesic distance of $k$ for $n_j$. Following this, the encoder applies a multilayer perception (MLP) \cite{hastie2009elements} to the input, which facilitates the propagation of input information across multiple layers and nonlinear activation functions. Ultimately, the encoder generates the posterior distribution of Granger causal structure  $\mathrm{q}_{\phi}(\mathbf{A}_{0:K}|O_t^{v_i,n_j},\mathbf{O}^{V,N}_{t-1 : t-\Omega},\mathbf{B}_{0:K})$. 

In a manner similar to variational inference for categorical latent variables, we assume that the posterior distribution of each causal edge in the Granger causal structure follows a Bernoulli distribution. More specifically, the encoder produces a latent vector $\mathbf{Z} \in [0, 1]^{|\mathbf{V}| \times |\mathbf{V}| \times K}$, where $z_{i, j, k}$, the $(i \times j \times k)$-th element of $\mathbf{Z}$, represents the parameter of Bernoulli distribution for corresponding edge. Consequently, $\mathrm{q}_{\phi}(\mathbf{A}_{0:K}|O_t^{v_i,n_j},\mathbf{O}^{V,N}_{t-1 : t-\Omega},\mathbf{B}_{0:K})$ can be expressed as:\begin{equation}
\small
\begin{aligned}
\mathrm{q}_{\phi}(\mathbf{A}_{0:K}|O_t^{v_i,n_j},\mathbf{O}^{V,N}_{t-1 : t-\Omega},\mathbf{B}_{0:K}) \\
= \prod_{i=1}^{|\mathbf{V}|} \prod_{j=1}^{|\mathbf{V}|} \prod_{k=0}^{K} \mathrm{q}_{\phi} (A_{k}^{v_i, v_j} | z_{i, j, k}) \\
\textrm{  with  } \: \: \: \mathrm{q}_{\phi}(A_{k}^{v_i, v_j} = 1 | z_{i, j, k}) = \sigma_{\beta} (z_{i, j, k}). 
\end{aligned}
\label{eq: encoder}
\end{equation}

Here, $\sigma_{\beta} (x) = 1 / (1 + \exp (-\beta x))$ denotes the sigmoid function with an inverse temperature parameter $\beta$.

Furthermore, for backpropagation through the discrete distribution $\mathrm{q}_{\phi}(\mathbf{A}_{0:K}|O_t^{v_i,n_j},\mathbf{O}^{V,N}_{t-1 : t-\Omega},\mathbf{B}_{0:K})$, we employ the Gumbel-Softmax trick \cite{jang2017categorical} to generate a differentiable sample as: \begin{equation}
\small
\hat{A}_{k}^{v_i, v_j} \!=\! \frac{\exp(\log(\mathrm{q}_{\phi}(A_{k}^{v_i, v_j} \!=\! 1 | z_{i, j, k}) + \varepsilon_1) / \tau)}{\sum_{e=0}^{1} \exp(\log(\mathrm{q}_{\phi}(A_{k}^{v_i, v_j} \!=\! e | z_{i, j, k}) + \varepsilon_e) / \tau)},
\end{equation}
where each $\varepsilon_{*}$ is an i.i.d. sample drawn from a standard Gumbel distribution, while $\tau$ is the temperature parameter controlling the randomness of $\hat{A}_{k}^{v_i, v_j}$.

\subsection{Generation Process of TNPAR} 
Next, we propose a decoder to implement the generation process of $\mathbf{O}_t^{V,N}$. In this part, $\mathbf{O}^{V,N}_{t-1 : t-\Omega}$ is filtered according to both $\hat{\mathbf{A}}_{0:K}$ and $\mathbf{B}_{0:K}$, as specified in Eq. (\ref{eq: TNPAR}). This filtered result is then used as input to the decoder. Notably, for neural network training, we treat $\hat{\mathbf{A}}_{0:K}$ as a weight matrix during training, and as a mask during testing. This can be accomplished using soft/hard Gumbel-Softmax sampling. Similar to the encoder, an MLP is used in the decoder.

Aligning with numerous works on counting processes that assume the distribution of occurrence numbers $O_t^{v_i,n_j}$ in each time interval is stationary \cite{stoyan2013stochastic,babu1996spatial}, we model each $\mathrm{p}_{\theta}(O_t^{v_i,n_j}|\mathbf{O}^{V,N}_{t-1 : t-\Omega}, \mathbf{A}_{0:K}, \mathbf{B}_{0:K})$ using a Poisson process as denoted in Eq. (\ref{eq: poisson process}). In this context, the conditional distribution of $O_t^{v_i,n_j}$ can be formalized as:\begin{equation}
\small
\begin{aligned}
    \mathrm{p}_{\theta}(O_t^{v_i,n_j}|\mathbf{O}^{V,N}_{t-1 : t-\Omega}, \mathbf{A}_{0:K}, \mathbf{B}_{0:K}) \\
    \!= \frac{(\lambda_t^{v_i,n_j}\! \Delta)^{O_t^{v_i,n_j}}}{O_t^{v_i,n_j}!} e^{-\lambda_t^{v_i,n_j}\! \Delta}, 
\end{aligned}
\label{eq: decoder}
\end{equation}
where $\lambda_t^{v_i,n_j}$, provided by the output of the decoder, represents the intensity parameter of the Poisson process.

\subsection{Optimization of TNPAR}
Based on the above analysis, we adopt the ELBO $\mathcal{L}_{e}$, defined in Eq. (\ref{eq: elbo}), as the objective function to estimate $\phi$ and $\theta$ and infer the Granger causal structure. To adapt to real-world scenarios, we further introduce acyclic constraint and sparsity constraint into the objective function.

In many real-world applications, the causal structure $\mathcal{G}_V$ is acyclic. For instance, based on experts' prior knowledge, the causal structure in the aforementioned operation and maintenance scenario of a mobile network is a DAG (Directed Acyclic Graph). With this in mind, we introduce an acyclic constraint proposed by \cite{yu2019dag}, denoted as:\begin{equation}
\small
\mathrm{h}(\mathbf{G}) \equiv \mathrm{tr}((\mathbf{I} + \frac{1}{|\mathbf{V}|} \mathbf{G})^{|\mathbf{V}|}) - |\mathbf{V}| = 0,
\label{eq: acyclic}
\end{equation}
where $\mathbf{G}$ is a matrix in which $G_{i, j} = 0$ signifies that $v_i$ has no Granger causality with $v_j$. Conversely, $G_{i, j} > 0$. The function $\mathrm{tr}(\cdot)$ calculates the trace of an input matrix. Then, a causal structure $\mathbf{G}$ is acyclic if and only if $\mathrm{h}(\mathbf{G}) = 0$; Otherwise, $\mathrm{h}(\mathbf{G}) > 0$. In this study, we set $G_{i, j} = \sum_{k=0}^{K} A_{k}^{v_i, v_j}$ and populate the diagonal of $\mathbf{G}$ with $0$, which implies that self-excitation is permitted in an acyclic graph. Subsequently, we design an acyclic regulared term $\mathcal{L}_{c}$ as:\begin{equation}
\small
\mathcal{L}_{c} = \mathbb{E}_{\mathrm{q}_{\phi}} [\mathrm{tr}((\mathbf{I} + \frac{1}{|\mathbf{V}|} \mathbf{G})^{|\mathbf{V}|}) - |\mathbf{V}|] = 0.
\label{eq: dag}
\end{equation}

Because the causal structure tends to be sparse in most real-world applications, we employ a $\ell_1$-norm sparsity constraint as follows: \begin{equation}
\small
    \mathcal{L}_{s} = \mathbb{E}_{\mathrm{q}_{\phi}} [\sum_{i=1}^{|\mathbf{V}|} \sum_{j=1}^{|\mathbf{V}|} \sum_{k=0}^{K} A_{k}^{v_i, v_j}] \leq \kappa,
\label{eq: sparse}
\end{equation}
where $\kappa$ is a small positive constant. Therefore, the training procedure boils down to the following optimization:
\begin{equation*}
\small
\max ~ \mathcal{L}_{e} \quad \text{s.t.} ~ \mathcal{L}_c=0, ~ \mathcal{L}_s\leq \kappa.
\end{equation*}

By leveraging the Lagrangian multiplier method, we define the total loss function $\mathcal{L}_{total}$ as:\begin{equation}
\small
\mathcal{L}_{total} = - \mathcal{L}_{e} + \lambda_{c} \mathcal{L}_{c} + \lambda_{s} \mathcal{L}_{s},
\label{eq: total loss}
\end{equation}
where $\lambda_{c}$ and $\lambda_s$ denote the regularized hyperparameters.
 
In summary, our proposed model is trained using the following objective:\begin{equation}
\small
    (\phi^\star, \theta^\star) = \mathop{\arg\min}\limits_{\phi, \theta} \mathcal{L}_{total}.
\end{equation}

For model training, we can adopt Adam \cite{kingma2014adam} stochastic optimization.

\section{Experiment}
In this section, we apply the proposed TNPAR method and the baselines to both simulated data and metropolitan cellular network alarm data. For all methods, we employ five different random seeds and present the results in graphical format, with error bars included. Our evaluation metrics for the experiments include Precision, Recall, F1 score \cite{powers2011evaluation}, Structural Hamming Distance (SHD), and Structural Intervention Distance (SID) \cite{peters2015structural}. Specifically, Precision refers to the fraction of predicted edges that exist among the true edges. Recall is the fraction of true edges that have been successfully predicted. F1 score is the weighted harmonic mean of both Precision and Recall, and is calculated as $F1 = \frac{2 \times Precision \times Recall}{Precision + Recall}$. SHD represents the number of edge insertions, deletions, or flips needed to transform a graph into another graph. SID is a measure that quantifies the closeness between two DAGs based on their corresponding causal inference statements.

\subsection{Dataset}
\subsubsection{Simulated Data}
The simulated data is generated as follows: a) a Granger causal structure $\mathcal{G}_V$ and a topological network $\mathcal{G}_N$ are randomly generated; b) root event records are generated via the Poisson process using a base intensity parameter $\mu$ in the Hawkes process. These root event records are spontaneously generated by the system; c) based on the root event records, propagated event records are discretely generated according to both the time interval $\Delta$ and the excitation intensity $\alpha$. Here, $\alpha$ represents the event excitation intensity in the Hawkes process. Given that the event sequences can be quite sparse in real-world data, we offer a time interval parameter $\Delta$ to the generation process, dividing the time domain $[0, T]$ to small intervals with indexes as $\{1, \ldots, \lceil T/\Delta \rceil \}$. Then, the event records can be summarized within the same timestamp. Note that the $\Delta \geq 0$. If $\Delta = 0$, it implies the use of the original event sequences.

All simulated data are generated by varying the parameters of the generation process one at a time while maintaining default parameters. We set these default parameters according to the data generation process used in the PCIC 2021 competition\footnote{\url{https://competition.huaweicloud.com/information/1000041487/dataset}}, inclusive of the additional parameter $\Delta$, as follows: 
$\# \textrm{nodes}=40$, $\# \textrm{event type}=20$, $\# \textrm{sample size}=20000$, $\mu=[0.00003, 0.00005]$, $\alpha=[0.02,0.03]$, $\Delta=2$.

\subsubsection{Metropolitan Cellular Network Alarm Data}
This real-world dataset is collected in a business scenario by a multinational communications company and is available in the PCIC 2021 competition. The data comprises a series of alarm records generated within a week according to both a topological network $\mathcal{G}_N$ and a causal structure $\mathcal{G}_V$. Specifically, the topological network $\mathcal{G}_N$ includes $3087$ network elements. The alarm records encompass $18$ alarm event types, totaling $228,030$ alarm event records. It is noteworthy that, due to the characteristics of the equipment and as confirmed by experts, the collected timestamps of alarm events exhibit certain time intervals.

% baseline 比较。
\begin{figure*}[t]
	\centering
	\subfigure[Sensitivity to Range of $\alpha (\times 1e-2)$]{
		\includegraphics[width=0.32\textwidth]{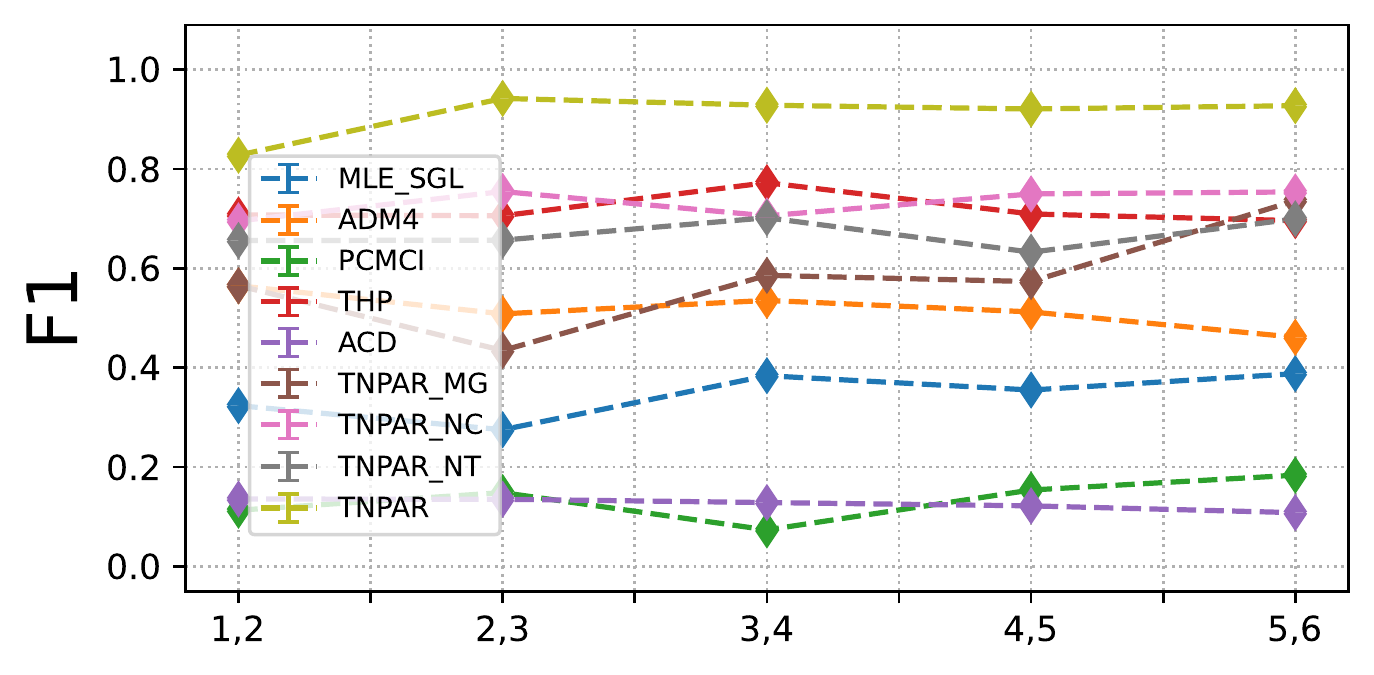}
		\label{f2:a}
	}
	\subfigure[Sensitivity to Range of $\mu (\times 1e-5)$]{
	\includegraphics[width=0.32\textwidth]{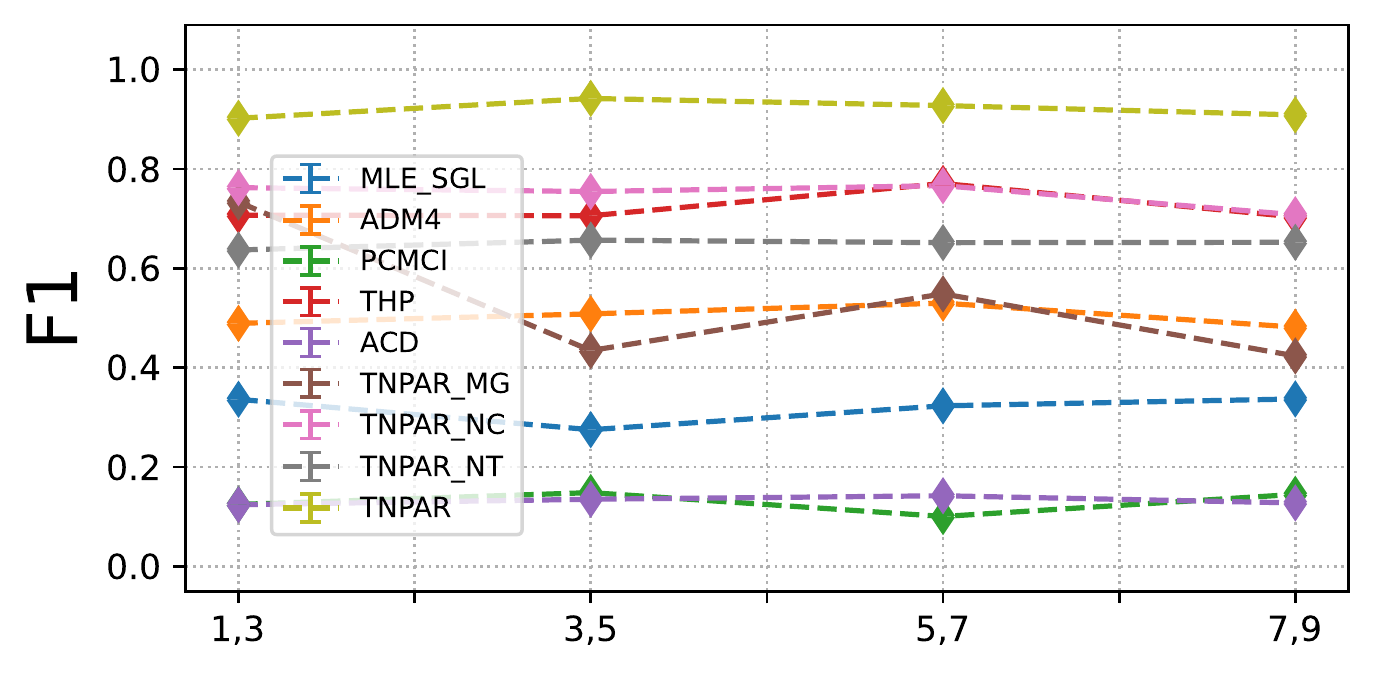}
	\label{f2:b}
}
	\subfigure[Sensitivity to Sample Size]{
	\includegraphics[width=0.32\textwidth]{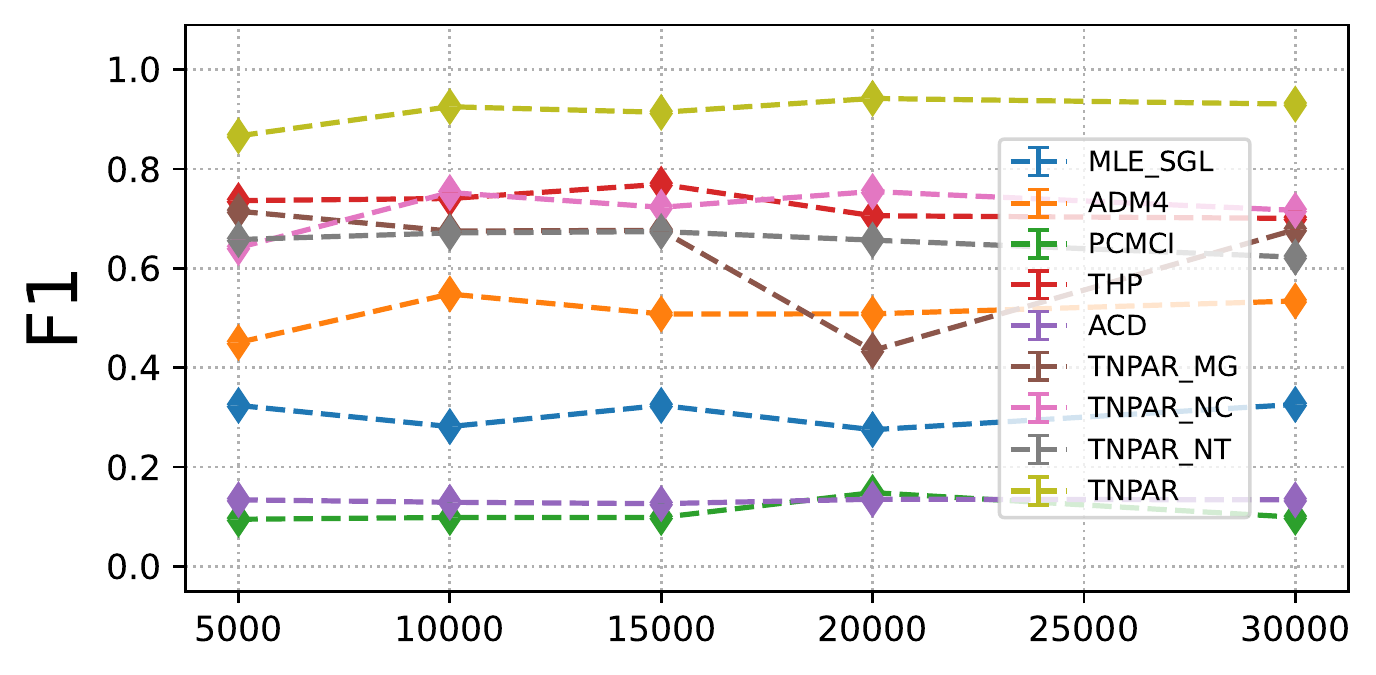}
	\label{f2:c}
}
	\subfigure[Sensitivity to $\Delta$]{
	\includegraphics[width=0.32\textwidth]{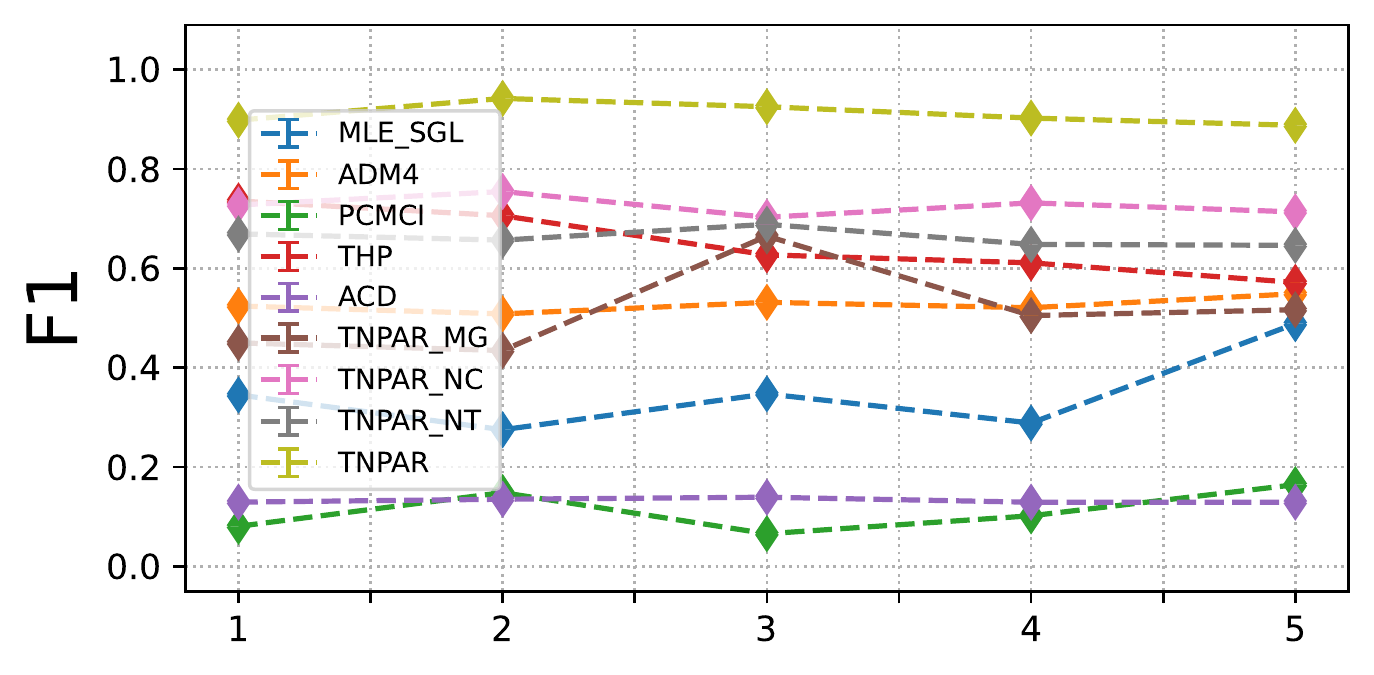}
	\label{f2:d}
}
	\subfigure[Sensitivity to Num. of Event Types]{
	\includegraphics[width=0.32\textwidth]{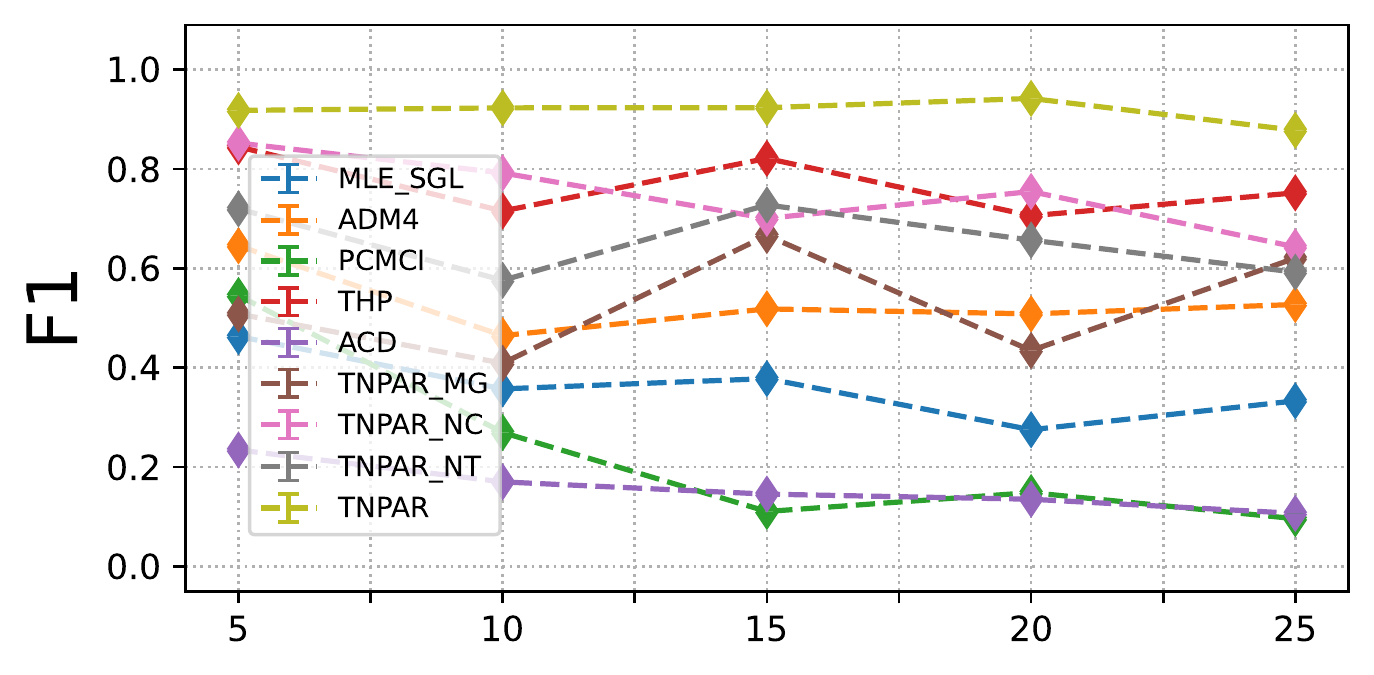}
	\label{f2:e}
}
	\subfigure[Sensitivity to Num. of Nodes]{
	\includegraphics[width=0.32\textwidth]{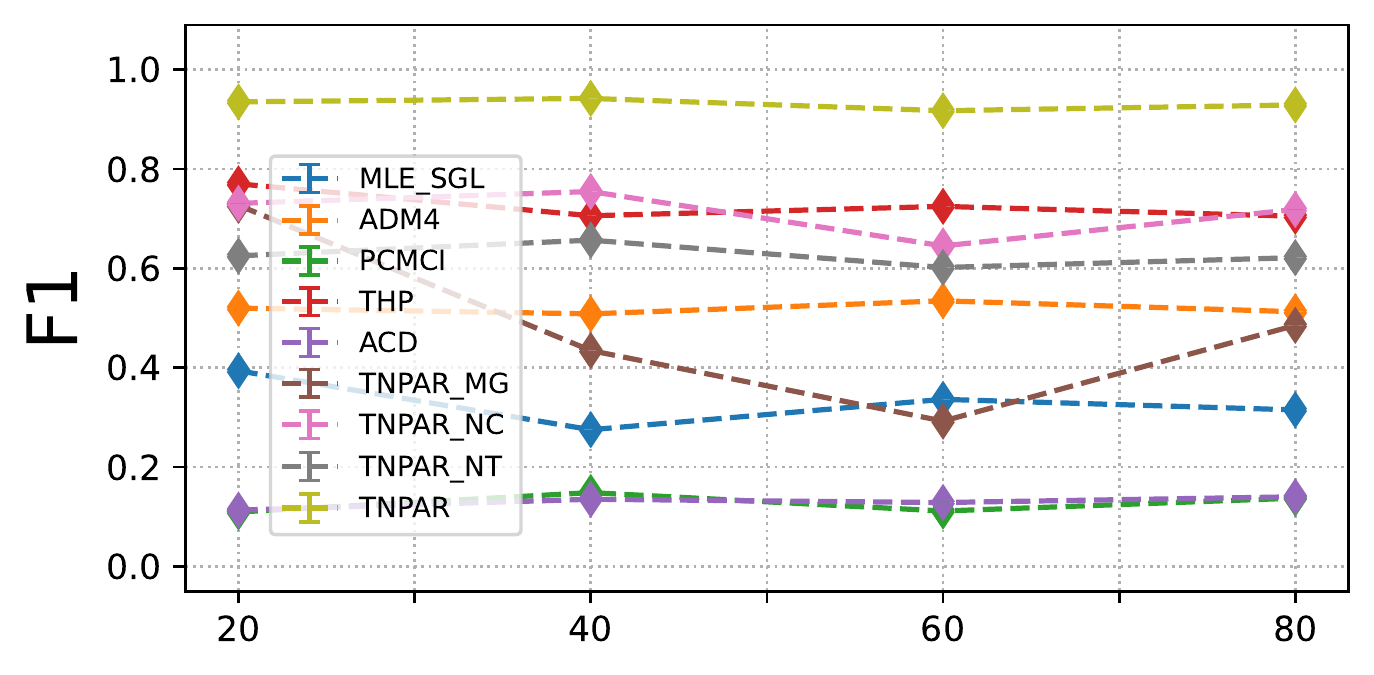}
	\label{f2:f}
}
	\caption{Results on the simulated data}	
	\label{f2}
\end{figure*}

\subsection{Baselines and Model Variants}
The following causal discovery methods are chosen as baseline comparisons in our experiments: PCMCI \cite{runge2020discovering}, ACD \cite{lowe2022amortized}, MLE\_SGL \cite{xu2016learning}, ADM4 \cite{zhou2013learning}, and THP \cite{cai2022thps}. In order to assess the components of our proposed method, we also introduce three variants of our method, namely TNPAR\_NT, TNPAR\_NC, and TNPAR\_MG. Specifically, TNPAR\_NT does not incorporate topological information and treats each event sequence independently. TNPAR\_MG also does not introduce the topological network and takes the event sequences in all nodes as one single sequence. TNPAR\_NC is a variant that excludes both acyclicity and sparsity regularization from the loss function. Detailed information about the baselines and variants can be found in Appendix C.

\subsection{Results on Simulated Data}
The F1 scores of different methods on the simulated data are depicted in Fig. \ref{f2}. Due to space limitations, results pertaining to Precision, Recall, SHD, and SID on the simulated data can be found in Appendix D. 

As shown in Fig. \ref{f2}, when compared with the results of PCMCI, ACD, ADM4, and MLE\_SGL (which do not consider the topological network behind the event sequences), both TNPAR and THP achieve superior performance across all cases. These results suggest that incorporating the topological network assists in learning Granger causality. Moreover, TNPAR and its variants significantly outstrip other baselines under most cases, demonstrating that the neural point process approach is powerful for modeling the generation process and the amortized inference technique is effective for Granger causal discovery. 

Traversing the value of parameters $\alpha$, $\mu$, and $\Delta$, the F1 scores of the TNPAR are generally better than those of other methods. These results signify that our method is relatively insensitive to the parameters of the generation process. Under settings with varying sample sizes, or changing numbers of event types or nodes, TNPAR achieves the highest F1 scores, illustrating the robustness of TNPAR. Even in cases with small sample sizes or high dimensional event types, TNPAR still outperforms other methods, corroborating the effectiveness of TNPAR.

Among the baseline methods, THP delivers the best results as it considers the topological network. Both ADM4 and MLE\_SGL outperform PCMCI and ACD, which demonstrates the efficacy of the point process methods in causal discovery for event sequences.

\subsection{Results on Metropolitan Cellular Network Alarm Data}
The experimental results of all algorithms on real-world data are presented in Table \ref{t: real world data}. From these results, it can be seen that the models incorporated the topological network (including TNPAR, TNPAR\_NC, THP) outperform that those rely on the i.i.d. assumption. Specifically, the F1 score of TNPAR is higher than that of THP, demonstrating that TNPAR can achieve the best trade-off between Precision and Recall without needing a prior distribution assumption. Although the Precision of TNPAR is marginally lower than that of THP, PCMCI, and ADM4, TNPAR exhibits the highest Recall. These results indicate that TNPAR is relatively insensitive to the weak Granger causal strength between event types, which accounts for the slightly higher SHD of TNPAR compared to other baselines. Moreover, TNPAR attains the best SID relative to other methods except ACD. These results underscore TNPAR's superior capability for causal inference \cite{peters2015structural}.

\begin{table}[]
\centering
\begin{tabular}{c|ccccc} 
\hline
Algorithm     & Prec.        & Rec.           & F1                  & SID  & SHD  \\ 
\hline
ACD           & 0.25          & 0.32          & 0.27           & \textbf{230.0}  & 127.0    \\
MLE\_SGL      & 0.39          & 0.12          & 0.18           & 263.0  & 100.0  \\
ADM4          & 0.49          & 0.34          & 0.40           & 292.0  & 95.0   \\
PCMCI         & 0.47          & 0.31          & 0.37           & 306.0  & 102.0  \\
THP           & \textbf{0.58}          & 0.35          & 0.44   & 266.0 & \textbf{85.0}       \\
TNPAR\_NT      & 0.31          & 0.48          & 0.38           &  257.0 &  150.4  \\
TNPAR\_NC      & 0.41          & 0.60          & 0.49           &  271.3 &  123.5  \\
TNPAR\_MG      & 0.35          & 0.31          & 0.33           &  272.5 &  115.2  \\
\textbf{TNPAR} & 0.46 & \textbf{0.65} & \textbf{0.54}  & 241.6   & 105.5  \\
\hline
\end{tabular}
\caption{Result on real-world data }
\label{t: real world data}
\end{table}

\subsection{Ablation Study}
Ablation experiments were conducted on both simulated and real-world data to assess the effectiveness of incorporating topological information and regularization in TNPAR. The experimental results given in Fig. \ref{f2} reveal a significant improvement in the performance of TNPAR compared to TNPAR\_NT, TNPAT\_MG, and TNPAR\_NC. These results underscore the importance of incorporating topological information, as well as acyclicity and sparsity regularization. Additionally, in most instances, TNPAR\_NT, TNPAR\_MG, and TNPAR\_NC perform comparably to THP, despite the fact that the THP fully leverages topological information and sparsity constraint. These results demonstrate the robustness of TNPAR.

\subsection{Case Study}
In the real-world data experiment, our method successfully infers the Granger causality across the nodes of the topological network, aligning with experts' knowledge. From the results, we observe that the count of causal edges diminishes as the geodesic distance, denoted as $k$, increases. This finding aligns with our intuition that some events are influenced solely by their node. For instance, the causal relationship $BD\_STATUS \rightarrow TU\_AIS$ is only valid for $k=0$. Here, $BD\_STATUS$ represents a device disconnecting from a node, and $TU\_AIS$ signifies that some tasks running on this node have been disrupted. Another notable discovery is that some edges exclusively transit across nodes, suggesting that these event types are triggered only by their neighbors. For example, consider $ETH\_LINK\_DOWN \rightarrow MW\_LOF$, where $ETH\_LINK\_DOWN$ denotes an error on the Ethernet interface in a downstream node, and $MW\_LOF$ indicates disconnection of an upstream node.

\section{Conclusion}
In this paper, we studied the Granger causal discovery problem on the topological event sequences. By leveraging the prior topological network and regarding the causal structure as a latent variable, we successfully addressed the challenge of unified modeling the topological network and latent causal structure, which enable us to propose the amortized causal discovery method. The experimental results on both simulation and real-world data demonstrate the effectiveness of our proposed method. TNPAR offers a viable solution for causal discovery in real-world settings with non-i.i.d. data. To further enhance its effectiveness, our future work will focus on improving sample efficiency, integrating prior knowledge, and so on.

%用da是不是不太对？建议重新写下。In the future, to relax the requirement of a large training sample inherent to the neural point process method, we aim to introduce the domain adaption technique to our method.

\section{Acknowledgments}
This research was supported in part by  National Key R\&D Program of China (2021ZD0111501), National Science Fund for Excellent Young Scholars (62122022), Natural Science Foundation of China (61876043, 61976052, 62206061, 62206064), the major key project of PCL (PCL2021A12), Guangzhou Basic and Applied Basic Research Foundation (2023A04J1700). 

\bibliography{aaai24}

\clearpage
\noindent\textbf{Supplementary Material} \\

In this supplementary material, we summarize the methods related to our work in Appendix A. Then we provide the proof for Proposition 1 in Appendix B. After that, Appendix C describes the details of the baseline methods in experiments. At last, more results in simulated data are given in Appendix D.

\section*{A. Related Work Summarization}

\renewcommand\thefigure{A.\arabic{figure}}  
\renewcommand\thetable{A.\arabic{table}}  

In this section, we explicitly compare our proposed method with the related methods from four perspectives in Table \ref{t: compare with baseline method}. From this table, we can find that our method is the only method for non-i.i.d. topological event sequences.

\begin{table*}[h]
\centering
\caption{Related methods for causal discovery.}
\begin{tabular}{l|llll}
\hline
Methods  & Categories                 & Density Estimator        & Probabilistic Relation & Non-i.i.d. \\
\hline
MLE\_SGL & Hawkes process-based       & Gaussian-based kernel    & No                     & No         \\
ADM4     & Hawkes process-based       & exponential-based kernel & No                     & No         \\
THP      & Hawkes process-based       & exponential-based kernel & No                     & Yes        \\
PCMCI    & constraint-based           & -                        & No                     & No         \\
ACD      & auto-regressive-based      & -                        & Yes                    & Yes        \\
TNPAR    & neural point process-based & neural network           & Yes                    & Yes       \\
\hline
\end{tabular}

\label{t: compare with baseline method}
\end{table*}

\section*{B. Proof for Proposition 1}

\renewcommand\thefigure{B.\arabic{figure}}  
\renewcommand\thetable{B.\arabic{table}}  

\begin{proposition}
Given the Granger causal structure $\mathcal{G}_V(\mathbf{V}, \mathbf{E}_V)$, the topological network $\mathcal{G}_N(\mathbf{N}, \mathbf{E}_N)$ and the max geodesic distance $K$, along with the assumption that the data generation process adheres to Eq. (4), we can deduce that $v_i \rightarrow v_j \notin \mathbf{E}_V$, if and only if $A_{k}^{v_i, v_j} = 0$ for every  $k \in \{0,...,K\}$.
\label{proposition: TNPAR}
\end{proposition}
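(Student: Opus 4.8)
The plan is to prove the biconditional by unwinding its two sides into a common language: the left side, $v_i \to v_j \notin \mathbf{E}_V$, is the functional invariance stipulated in the definition of Granger non-causality of topological event sequences, while the right side is the entrywise vanishing of $\mathbf{A}_{0:K}$. The bridge between them is the composite filter $\mathrm{f}^{\mathcal{G}_N}_{n_k}(\mathrm{f}^{\mathcal{G}_V}_{v_j}(\cdot\,,\mathbf{A}_{0:K}),\mathbf{B}_{0:K})$ that feeds $\mathrm{g}$ in Eq. (4). The first step I would establish is a structural observation: when generating $O_t^{v_j,n_k}$, the history $\mathbf{O}^{v_i,n_\ell}_{t-1:t-\Omega}$ can reach $\mathrm{g}$ only through the single entry $A_{d}^{v_i,v_j}$, where $d$ is the geodesic distance between $n_k$ and $n_\ell$; indeed, the topological filter isolates the node pairs at distance $d$ via $\mathbf{B}_d$, and the causal filter then zeroes that contribution precisely when $A_d^{v_i,v_j}=0$.

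For the ($\Leftarrow$) direction I would assume $A_k^{v_i,v_j}=0$ for all $k\in\{0,\dots,K\}$. Because the geodesic distance of any node pair lies in $\{0,\dots,K\}$, the observation above shows the causal filter erases the entire history of $v_i$ irrespective of the source node, so the argument handed to $\mathrm{g}$ in the generation of $O_t^{v_j,n_k}$ contains no trace of $\mathbf{O}^{v_i,n_1}_{t-1:t-\Omega},\dots,\mathbf{O}^{v_i,n_{|\mathbf{N}|}}_{t-1:t-\Omega}$. The invariance required by the definition of topological Granger non-causality then holds for every $t$ and every $n_k$, which is exactly $v_i\to v_j\notin\mathbf{E}_V$.

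For the ($\Rightarrow$) direction I would argue by contraposition. Assuming $A_k^{v_i,v_j}\neq 0$ for some $k$, I would select a node pair $(n_k,n_\ell)$ realizing geodesic distance $k$; such a pair exists provided $\mathcal{G}_N$ is connected, since a shortest path attaining the maximal distance $K$ has prefixes realizing every distance in $\{0,\dots,K\}$. For that pair the causal filter retains $v_i$'s history, and perturbing $\mathbf{O}^{v_i,n_\ell}_{t-1:t-\Omega}$ to some $\mathbf{\hat{O}}^{v_i,n_\ell}_{t-1:t-\Omega}$ would then change $\mathrm{g}$ and hence $O_t^{v_j,n_k}$, contradicting the invariance in the definition; therefore $v_i\to v_j\in\mathbf{E}_V$.

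I expect the main obstacle to be the final step of the contrapositive, namely certifying that a retained input genuinely moves the output of $\mathrm{g}$. The invariance in the definition does not by itself prevent $\mathrm{g}$ from ignoring an input it is nominally given, so the argument needs a \emph{faithfulness} (minimality) hypothesis on the data-generating process: $A_d^{v_i,v_j}=1$ is posited only when $\mathrm{g}$ is non-constant in the corresponding history slot. I would state this assumption explicitly, observing that it is the standard convention linking a structural parametrization to the Granger definition of \cite{tank2018neural} that underlies the whole section. The secondary technical point, that every distance in $\{0,\dots,K\}$ is actually realized by some node pair, I would dispatch with the connectivity of $\mathcal{G}_N$ noted above; with both in hand the two directions combine into the stated equivalence.
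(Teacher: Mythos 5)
Your proof takes essentially the same route as the paper's: both directions are argued by tracing how the composite filters $\mathrm{f}^{\mathcal{G}_V}_{v_i}$ and $\mathrm{f}^{\mathcal{G}_N}_{n_j}$ either erase or retain the history of $v_i$ before it reaches $\mathrm{g}$, with the backward direction following from the filter zeroing out all of $v_i$'s history and the forward direction by contraposition. The one substantive difference is that you explicitly flag the need for a faithfulness/minimality hypothesis in the $(\Rightarrow)$ direction --- the paper's own proof elides this by saying a differing input is ``potentially altering'' the output of $\mathrm{g}$ and treating that as a contradiction, which is exactly the looseness you identify; your explicit assumption (together with the minor connectivity remark guaranteeing each geodesic distance is realized) makes the argument tighter than the published version.
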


\begin{proof}
Let {$O_{t}^{v_j, n_k}$} and {$\hat{O}_{t}^{v_j, n_k}$} represent the outputs for $\mathrm{g}(\mathbf{O}^{v_1, n_1}_{t-1 : t-\Omega}, ..., \mathbf{O}^{v_i, n_1}_{t-1 : t-\Omega} ,..., \mathbf{O}^{v_i, n_{|N|}}_{t-1 : t-\Omega} ,..., \mathbf{O}^{v_{|V|}, n_{|N|}}_{t-1 : t-\Omega})$ and $\mathrm{g}(\mathbf{O}^{v_1, n_1}_{t-1 : t-\Omega}, ..., \mathbf{\hat{O}}^{v_i, n_1}_{t-1 : t-\Omega} ,..., \mathbf{\hat{O}}^{v_i, n_{|N|}}_{t-1 : t-\Omega} ,..., \mathbf{O}^{v_{|V|}, n_{|N|}}_{t-1 : t-\Omega})$ in Definition 3, respectively. 

\noindent$\Longrightarrow$:
If $v_i\rightarrow v_j \notin \mathbf{E}_V$, then according to Definition 3 and Eq. (4), it necessitates that $O_t^{v_j, n_k} = \hat{O}_t^{v_j, n_k}$. Conversely, according to Eq. (4), if $A_{k}^{v_i, v_j} \neq 0$ for any  $k \in \{0,...,K\}$, the inputs for $\mathrm{g}(\cdot)$ will differ, potentially altering its output, (i.e., {\small$O_{t}^{v_j, n_k} \neq \hat{O}_{t}^{v_j, n_k}$}), leading to a contradiction. Hence, if $v_i\rightarrow v_j \notin \mathbf{E}_V$, it logically follows that $A_{k}^{v_i, v_j} = 0$ for every  $k \in \{0,...,K\}$ must hold.

\noindent$\Longleftarrow $:
If $A_{k}^{v_i, v_j} = 0$ for every  $k \in \{0,...,K\}$, then {$O_{t}^{v_j, n_k}$} and {$\hat{O}_{t}^{v_j, n_k}$} will yield identical values, as their corresponding input values for Eq. (4) remain unchanged following the application of the filter function $\mathrm{f}^{\mathcal{G}_V}_{v_i}(\cdot)$. This implies that if $A_{k}^{v_i, v_j} = 0$ for every  $k \in \{0,...,K\}$, then {$O_{t}^{v_j, n_k}$}  remains unaffected by varying $\mathbf{O}^{v_i, n_1}_{t-1 : t-\Omega} ,..., \mathbf{O}^{v_i, n_{|N|}}_{t-1 : t-\Omega}$, allowing us to infer $v_i\rightarrow v_j \notin \mathbf{E}_V$ as per Definition 3.
\end{proof}

\section*{C. Details of the Methods in Experiments}

\renewcommand\thefigure{C.\arabic{figure}}  
\renewcommand\thetable{C.\arabic{table}}  

This section provides the details and parametric settings of methods chosen by cross-validation in the experiments. We first begin with the baseline methods as follows:
\begin{itemize}[leftmargin=*]
\item \textbf{PCMCI:} The algorithm uses the independence test in the first stage to remove the irrelevant edges between event types at different time lags from a complete undirected graph. Subsequently, the momentary conditional independence (MCI) test is introduced in the second stage to determine the arrows for the undirected edges. In these experiments, we set the time interval to 5 seconds, the max time lag to 2, and the significant level of independence test to 0.01. Because the input data is discrete, the conditional mutual information (CMI) test is used as the independence test method. 
\item \textbf{MLE\_SGL:} The algorithm combines a maximum likelihood estimator (MLE) with a sparse group lasso (SGL) regularizer for the Hawkes process model to learn the Granger causal structure between event types. In detail, MLE\_SGL uses a linear combination of Gaussian base functions to implement the intensity function in the Hawkes process. 
%The intensity function of MLE\_SGL is given in Eq. (\ref{aeq: MLE_SGL}), where $\kappa_m$ and $\alpha^{m}_{v^{'},v}$ are the $k$-th base function and causal strength from $v^{'}$ to $v$. 
In these experiments, the number of the Gaussian base function is set to 10 and the coefficient of the group lasso regularization is set to 0.5.
% \begin{equation}
% \begin{aligned}
% \lambda _{v} (t) & =\mu _{v} +\sum _{v'\in \mathbf{V}}\int ^{t}_{t_{0}} \phi _{v',v} (t-t')dC_{v'}( t')\\
%  & =\mu _{v} +\sum _{v'\in \mathbf{V}}\int ^{t}_{t_{0}}\sum ^{M}_{m=1} \alpha ^{m}_{v',v} \kappa _{m} (t-t' )dC_{v'}( t').
% \end{aligned}
% \tag{B.1}
% \label{aeq: MLE_SGL}
% \end{equation}

\item \textbf{ACD:} The algorithm introduces an amortized model that infers causal relations across time series with different underlying causal graphs, which share relevant information. Specifically, ACD encompasses an amortized encoder that predicts the edges in the causal graph, and a decoder models the dynamics of the system under the predicted causal relations. In these experiments, we aggregate the event sequences into time series using a window size of 5 seconds. For the algorithm's execution, we employ the original MLP encoder and decoder, utilizing the default parameters given by the author.

\item \textbf{ADM4:} The algorithm learns the Granger causal structure between event types by combining techniques of alternating direction method of multipliers and majorization minimization for Hawkes process. Different from MLE\_SGL, it uses the exponential kernel to implement the intensity function with a decay parameter $\beta$. 
%The intensity function of the ADM4 method is given in Eq. (\ref{aeq: ADM4}). 
To impose priors on sparsity and low-rank causal structure, both the nuclear norm and $\ell_1$-norm regularization are adopted. In our experiment, the decay $\beta$ is set to 0.4 and the coefficients of $\ell_1$-norm and nuclear regularization are set to 0.05.
% \begin{equation}
% \label{aeq: ADM4}
% \tag{B.2}
% \begin{aligned}
% \lambda _{v} (t) & =\mu _{v} +\sum _{v'\in \mathbf{V}}\int ^{t}_{t_{0}} \phi _{v',v} (t-t')dC_{v'}( t')\\
%  & =\mu _{v} +\sum _{v'\in \mathbf{V}}\int ^{t}_{t_{0}} \alpha _{v',v} \beta \exp (-\beta ( t-t') )dC_{v'}( t')
% \end{aligned}
% \end{equation}
\item \textbf{THP:} The algorithm extends the time-domain convolution property of the Hawkes process to the time-graph domain for learning causal structure among event types generated in a topological network. 
%The intensity function of THP is given in Eq. (\ref{aeq: THP}), where $\kappa (\cdot)$ is the convolutional kernel. In our experiment, the kernel function is implemented by an exponential function $\kappa(t)=\exp(-\delta(t))$ as the original implementation with decay rate $\delta=0.1$.
In these experiments, we follow the original implementation of THP and use an exponential kernel to model its intensity function. The decay rate $\delta$ of the exponential kernel is set to 0.1.
% \begin{equation}
% \label{aeq: THP}
% \tag{B.3}
% \begin{aligned}
% \lambda _{v} (n,t)=\mu _{v}  + \sum _{v'\in \mathbf{V}}\sum _{n'\in \mathbf{N}}\sum ^{K}_{k=0} \alpha _{v',v,k}\sum _{t'\in \mathbf{T}_{t^{-}}} \hat{A}^{k}_{n',n} \kappa ( t-t') X_{n',v',t'},
% \end{aligned}
% \end{equation}
\end{itemize}

To evaluate the modules of the proposed method, we devise three variants of our method: TNPAR\_NT, TNPAR\_MG, and TNPAR\_NC. The details of our method and its variants in the experiments are as follows.
\begin{itemize}[leftmargin=*]
\item \textbf{TNPAR:} In this work, we propose TNPAR for Granger causal discovery on topological event sequences. By incorporating the topological network $\mathbf{B}_{0:K}$ and the casual structure $\mathbf{A}_{0:K}$, TNPAR solves the problem of causal discovery from non-i.i.d. data. In addition, TNPAR is applied to optimize the parameters with a sparsity regularization $\mathcal{L}_{s}$ and an acyclicity regularization $\mathcal{L}_{c}$ on $\mathbf{A}_{0:K}$. In these experiments, to obtain a sparse and acyclic causal structure, $\mathcal{L}_s$ is set to 1e-4 and $\mathcal{L}_c$ is 1e-10.
% \begin{equation}
% O_t^{v_i,n_j} \!\!=\!g(f^{\mathcal{G}_N}_{n_j}\!(f^{\mathcal{G}_V}_{v_i}(O^{V,N}_{t-1 : t-\Omega}, A_{0:K}\!), B_{0:K})) \!+ \epsilon_t^{v_i, n_j}, 
% \tag{B.4}
% \label{aeq: TNPAR}
% \end{equation}
\item \textbf{TNPAR\_NT:} To evaluate the effectiveness of introducing topological neighbors' information into TNPAR, TNPAR\_NT only uses the historical information on its own node, without considering the topological information.
%The generation process of TNPAR\_NT is given in Eq. (\ref{aeq: TNPAR_NT}). 
In our experiment, $\lambda_{s}$ is set to 1e-4 and $\lambda_{c}$ is set to 1e-10.
% \begin{equation}
% O_t^{v_i,n_j} \!\!=\!g(f^{\mathcal{G}_V}_{v_i}(O^{V,n_j}_{t-1 : t-\Omega}, A_{0:K}\!)) \!+ \epsilon_t^{v_i, n_j}, 
% \tag{B.5}
% \label{aeq: TNPAR_NT}
% \end{equation}

\item \textbf{TNPAR\_MG:} To evaluate the effectiveness of introducing the topological network into TNPAR, we merge all event sequences produced by the topological nodes, which is used as the training data for TNPAR\_MG. In our experiment, $\lambda_{s}$ is set to 1e-9 and $\lambda_{c}$ is set to 1e-7.

\item \textbf{TNPAR\_NC:} To evaluate the effectiveness of the acyclic and sparse regularization in TNPAR, TNPAR\_NC removes the acyclicity and sparsity regularization from the loss function. That is, $\lambda_{s}$ and $\lambda_{c}$ are both set to 0.

\end{itemize}

\section*{D. Details of the Results in Simulation Dataset}

\renewcommand\thefigure{D.\arabic{figure}}  
\renewcommand\thetable{D.\arabic{table}}  

In this section, we further provide F1, Precision, Recall, SID, and SHD of baseline methods and our method in each experimental setting. The results are given in Fig. \ref{appendix fig: f1} - \ref{appendix fig: SHD}. All experiments are conducted on 60 cores AMD EPYC 7543 Processor and 4 * RTX 3090 with 320GB RAM. The operating system is Linux. In addition, the relevant software libraries and frameworks include the PyTorch 1.7.0 version, Python 3.8 version, and Cuda 11.0.

From the experimental results, we can find that TNPAR achieves the best performance compared to all baselines (MLE\_SGL\, ADM4, PCMCI, and THP), which demonstrates the superiority of TNPAR. In addition, with the varying parameters $\alpha$, $\mu$, and $\Delta$ in the data generation process, the performance of TNPAR is more robust than that of the baselines in all cases. The reason is that TNPAR is implemented by the neural point process, which holds a powerful capacity in modeling the generation process. Moreover, even in the cases of low sample size, high event number, or high node number, the experimental results of TNPAR remain well, which demonstrates its excellence.

Compared with the variants (TNPAR\_NT, TNPAR\_MG, and TNPAR\_NC), TNPAR achieves the best F1 score and SHD in all ablation studies, which proves the necessity of introducing the topological network, sparse constraint, and acyclic constraint. The Recall of TNPAR\_NT and TNPAR\_MG are lower than that of TNPAR because they do not learn the causal relationships across nodes while TNPAR does. In addition, the Precision of TNPAR\_NC is lower than that of TNPAR, which means there are more redundant edges in the result of TNPAR\_NC than TNPAR due to the lack of constraints.

%Among the baselines, THP achieves the best results in all metrics, proving the introduction of topological information is an effective way to solve the non-i.i.d. again. In the rest of the baseline methods, AMD4 is better than MLE\_SGL because the exponential kernel of ADM4 is more general in modeling the generation process.

\begin{figure*}[]
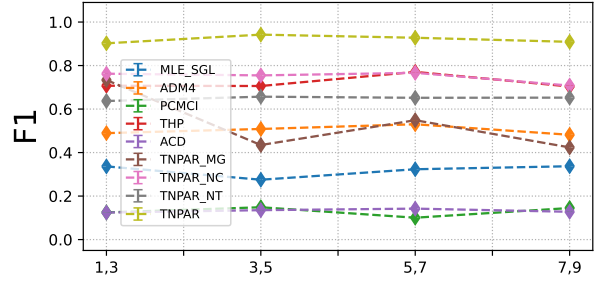
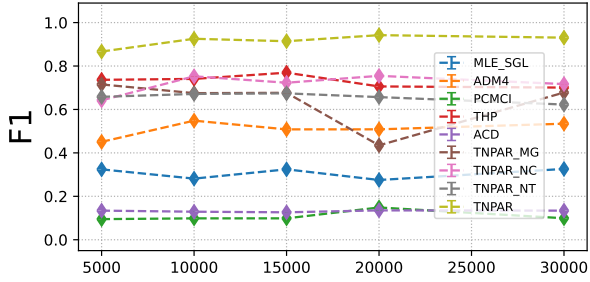
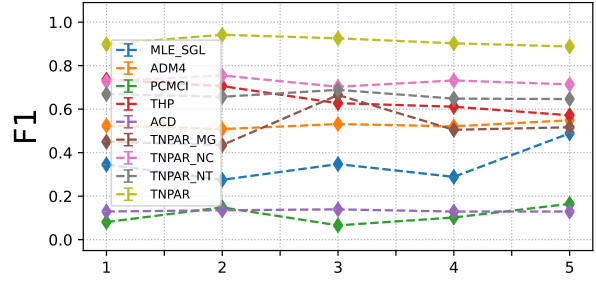
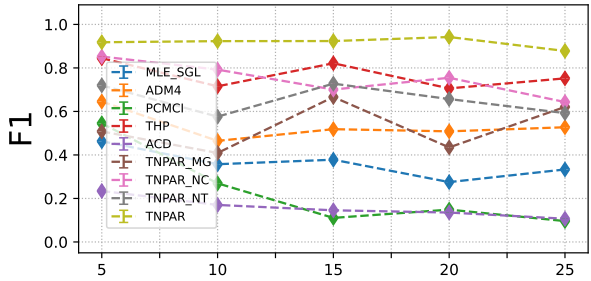
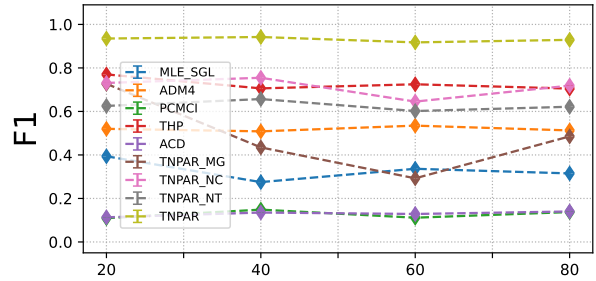

	\centering
	\subfigure[Sensitivity to Range of $\alpha (\times 1e-2)$]{
		\includegraphics[width=0.45\textwidth]{result_sensitivity/f1/alpha_range_with_error_bar.pdf}
		\label{appendix fig f1:a}
	}
	\subfigure[Sensitivity to Range of $\mu (\times 1e-5)$]{
	\includegraphics[width=0.45\textwidth]{result_sensitivity/f1/mu_range_with_error_bar.pdf}
	\label{appendix fig f1:b}
}
	\subfigure[Sensitivity to Sample Size]{
	\includegraphics[width=0.45\textwidth]{result_sensitivity/f1/sample_size_with_error_bar.pdf}
	\label{appendix fig f1:c}
}
	\subfigure[Sensitivity to $\Delta$]{
	\includegraphics[width=0.45\textwidth]{result_sensitivity/f1/delta_with_error_bar.pdf}
	\label{appendix fig f1:d}
}
	\subfigure[Sensitivity to Num. of Event Types]{
	\includegraphics[width=0.45\textwidth]{result_sensitivity/f1/event_num_with_error_bar.pdf}
	\label{appendix fig f1:e}
}
	\subfigure[Sensitivity to Num. of Nodes]{
	\includegraphics[width=0.45\textwidth]{result_sensitivity/f1/NE_num_with_error_bar.pdf}
	\label{appendix fig f1:f}
}
	\caption{F1 scores on the simulated data}	
	\label{appendix fig: f1}
\end{figure*}

\begin{figure*}[]
	\centering
	\subfigure[Sensitivity to Range of $\alpha (\times 1e-2)$]{
		\includegraphics[width=0.45\textwidth]{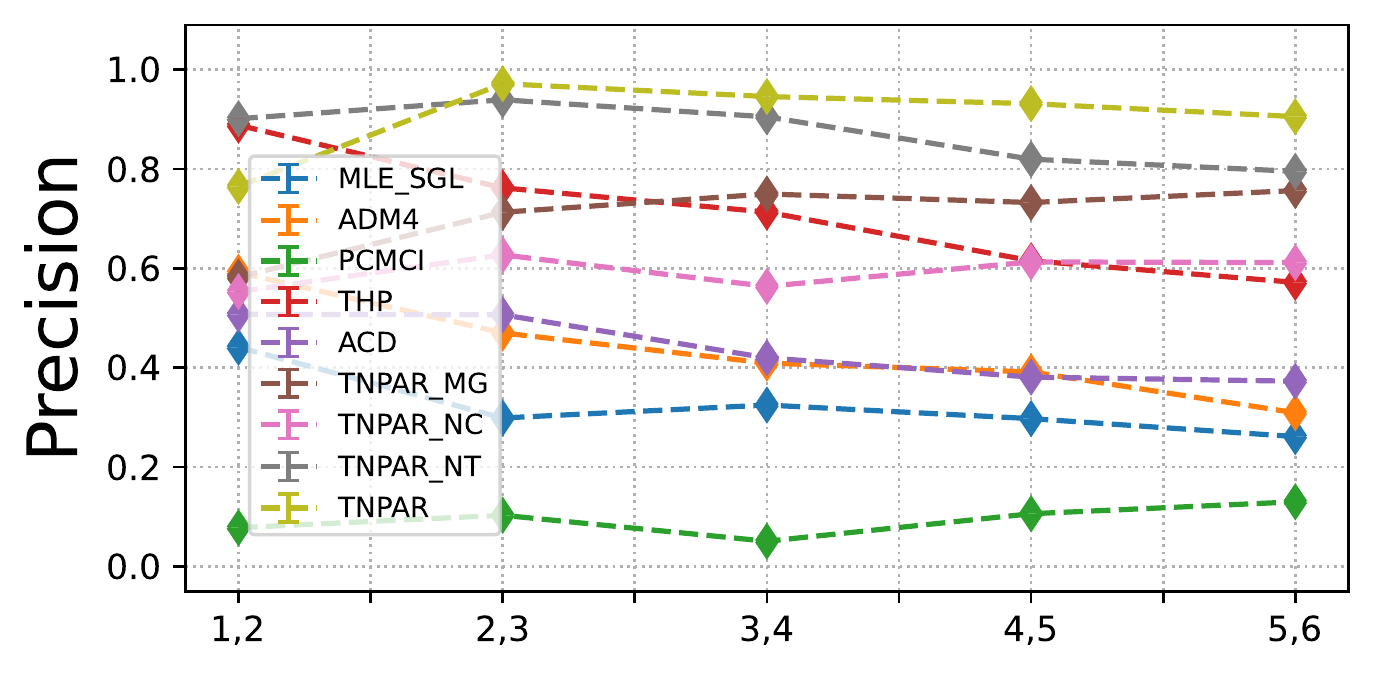}
		\label{appendix fig precision:a}
	}
	\subfigure[Sensitivity to Range of $\mu (\times 1e-5)$]{
	\includegraphics[width=0.45\textwidth]{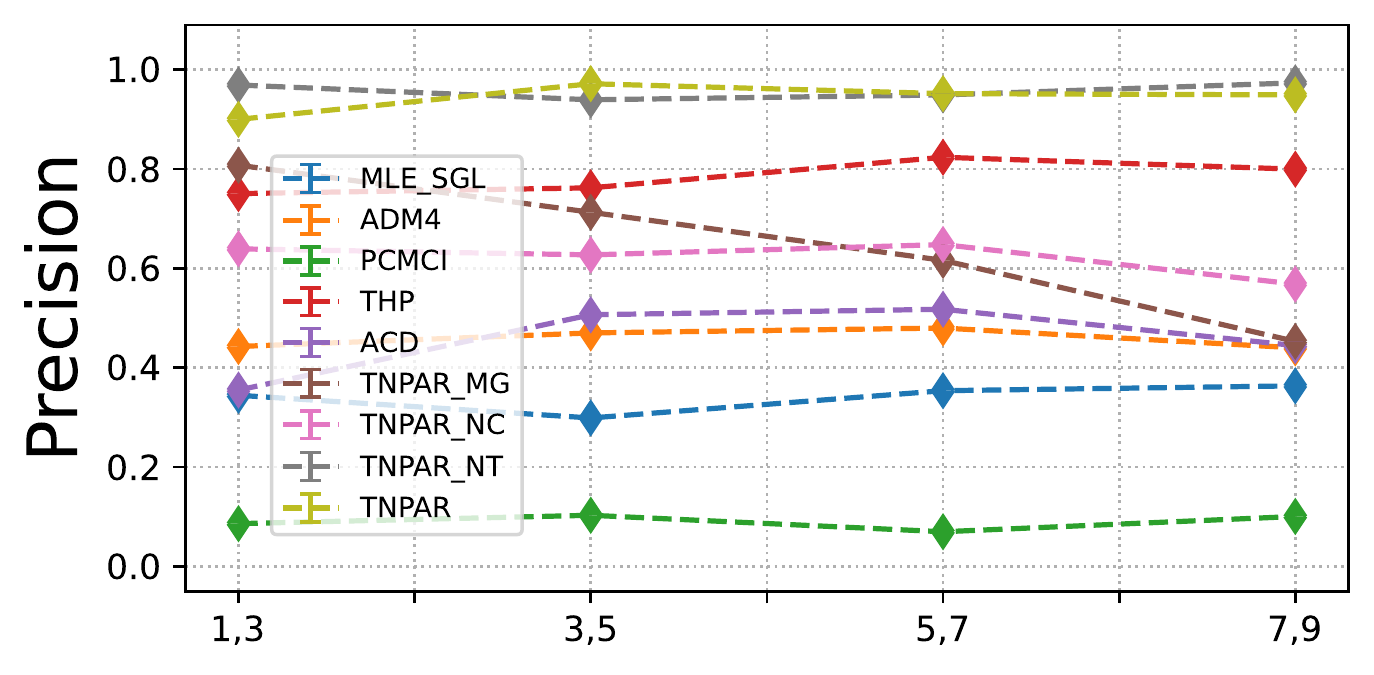}
	\label{appendix fig precision:b}
}
	\subfigure[Sensitivity to Sample Size]{
	\includegraphics[width=0.45\textwidth]{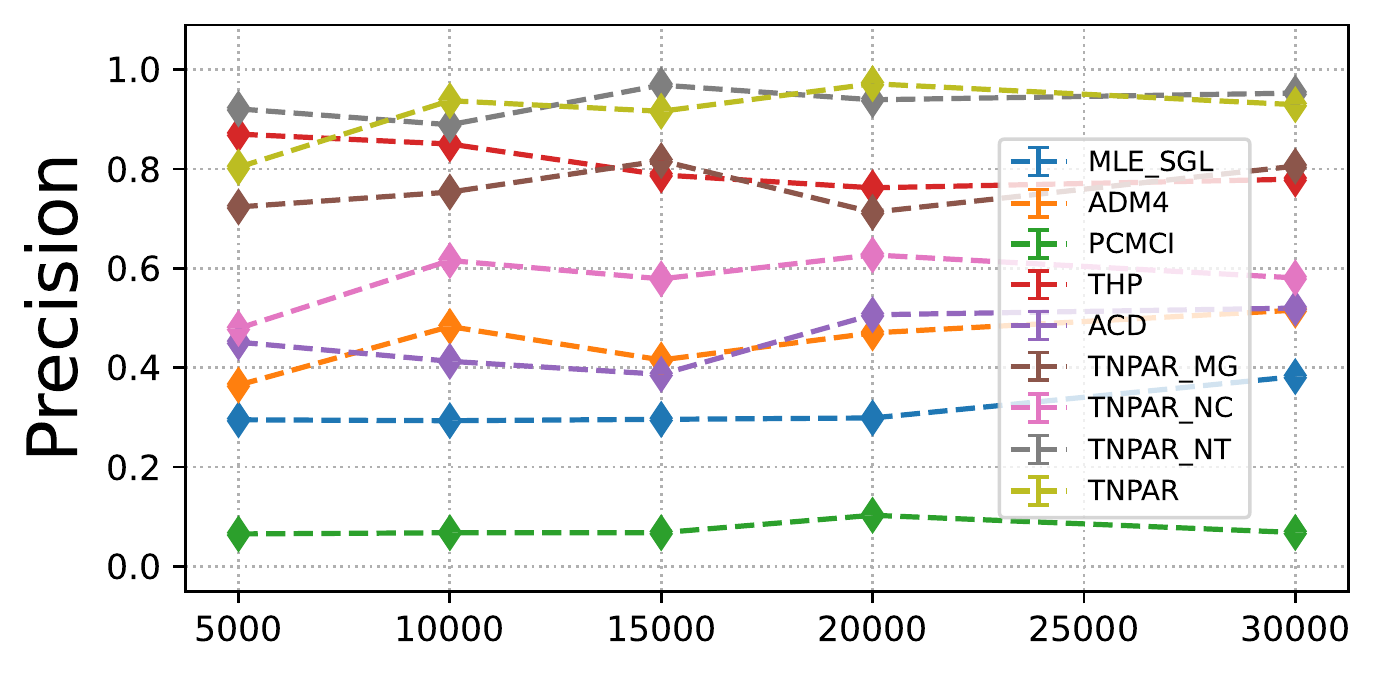}
	\label{appendix fig precision:c}
}
	\subfigure[Sensitivity to $\Delta$]{
	\includegraphics[width=0.45\textwidth]{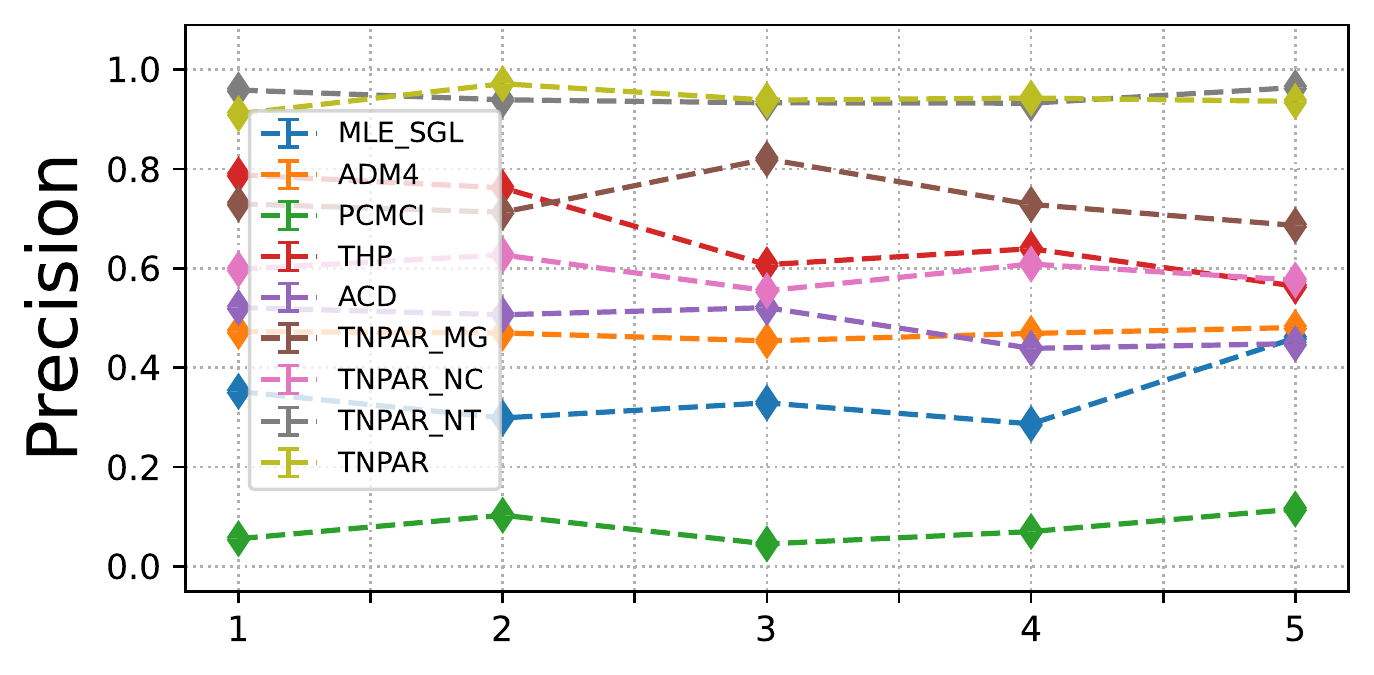}
	\label{appendix fig precision:d}
}
	\subfigure[Sensitivity to Num. of Event Types]{
	\includegraphics[width=0.45\textwidth]{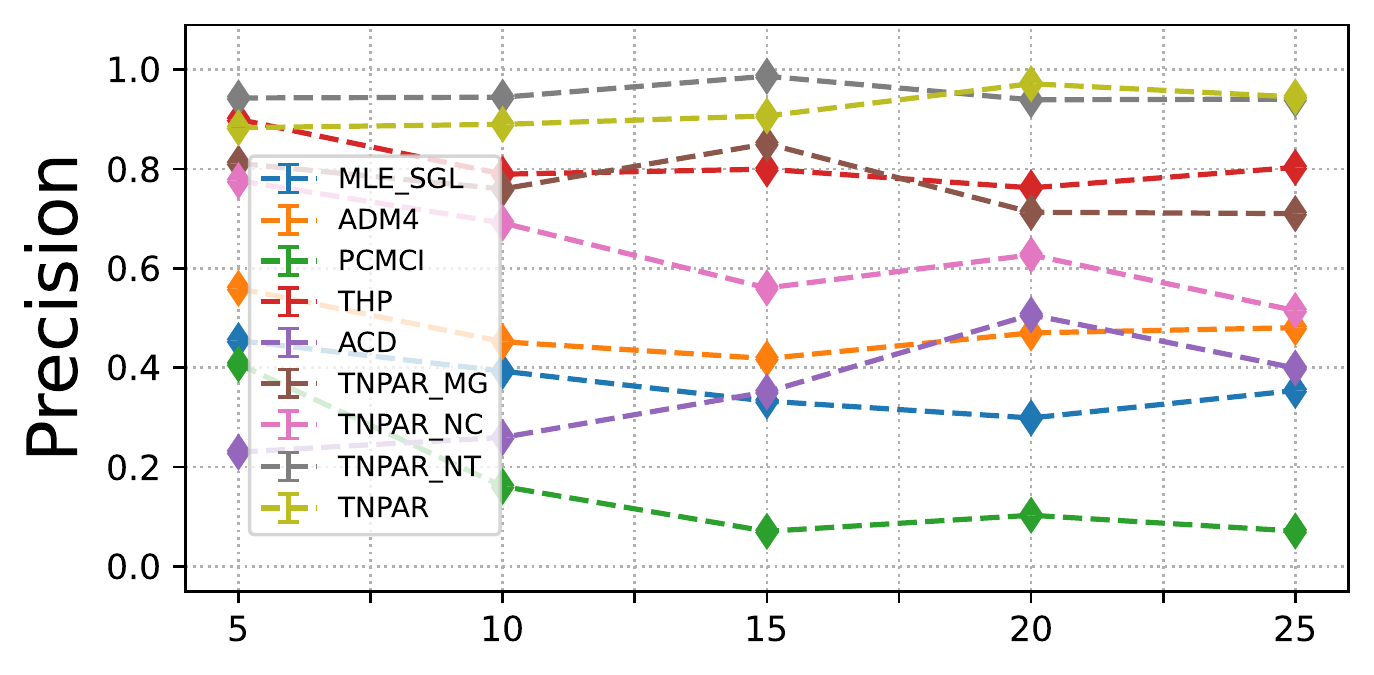}
	\label{appendix fig precision:e}
}
	\subfigure[Sensitivity to Num. of Nodes]{
	\includegraphics[width=0.45\textwidth]{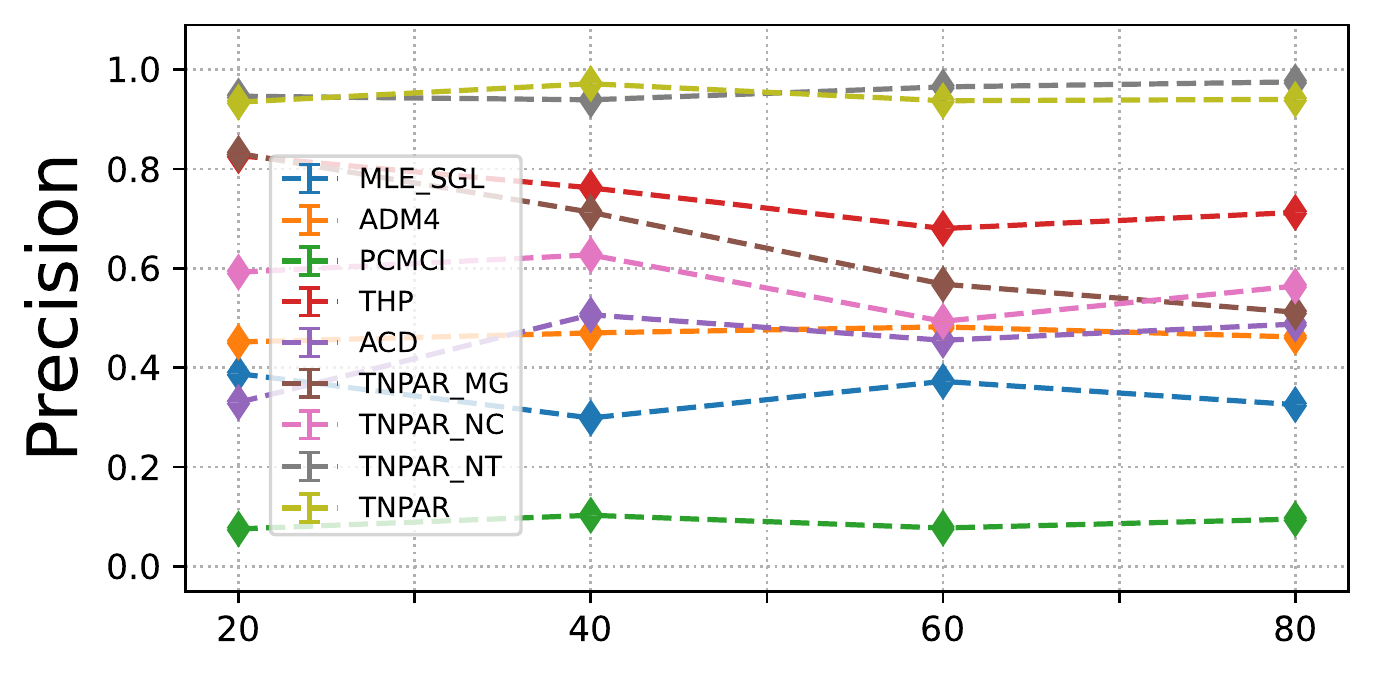}
	\label{appendix fig precision:f}
}
	\caption{Precision on the simulated data}	
	\label{appendix fig: precision}
\end{figure*}

\begin{figure*}[]
	\centering
	\subfigure[Sensitivity to Range of $\alpha (\times 1e-2)$]{
		\includegraphics[width=0.45\textwidth]{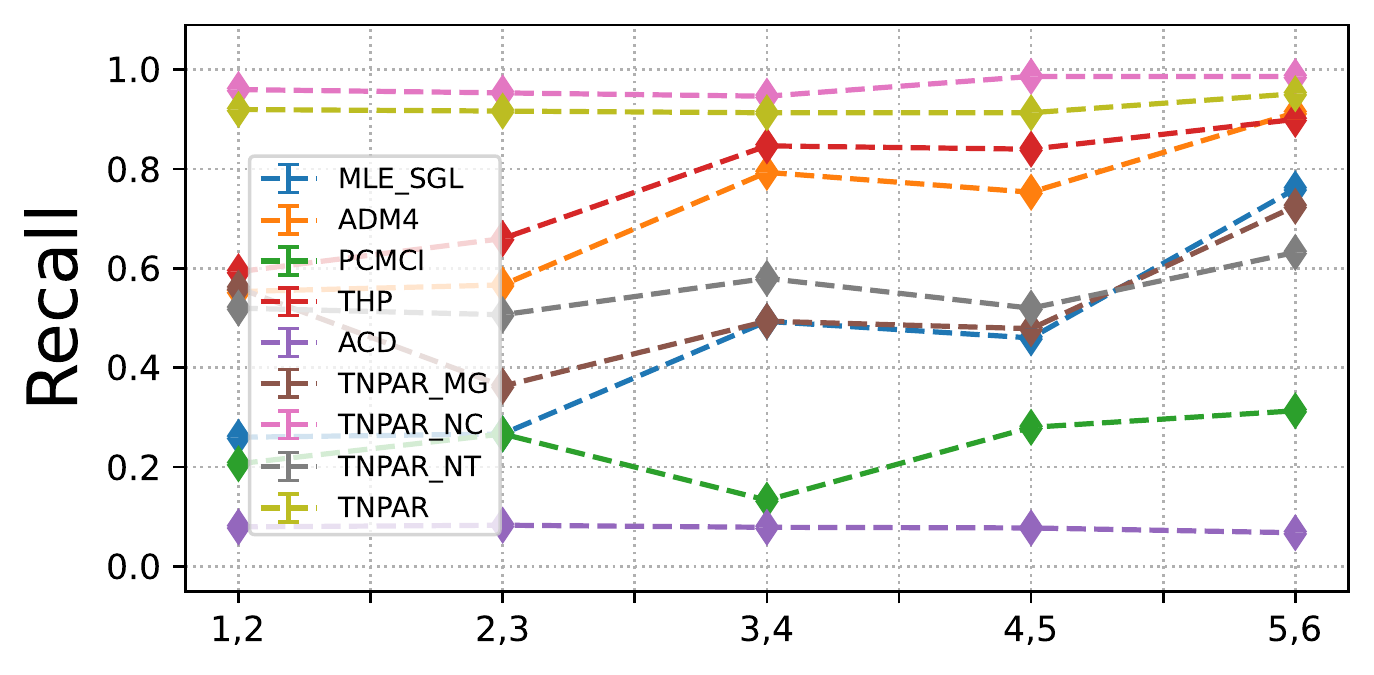}
		\label{appendix fig recall:a}
	}
	\subfigure[Sensitivity to Range of $\mu (\times 1e-5)$]{
	\includegraphics[width=0.45\textwidth]{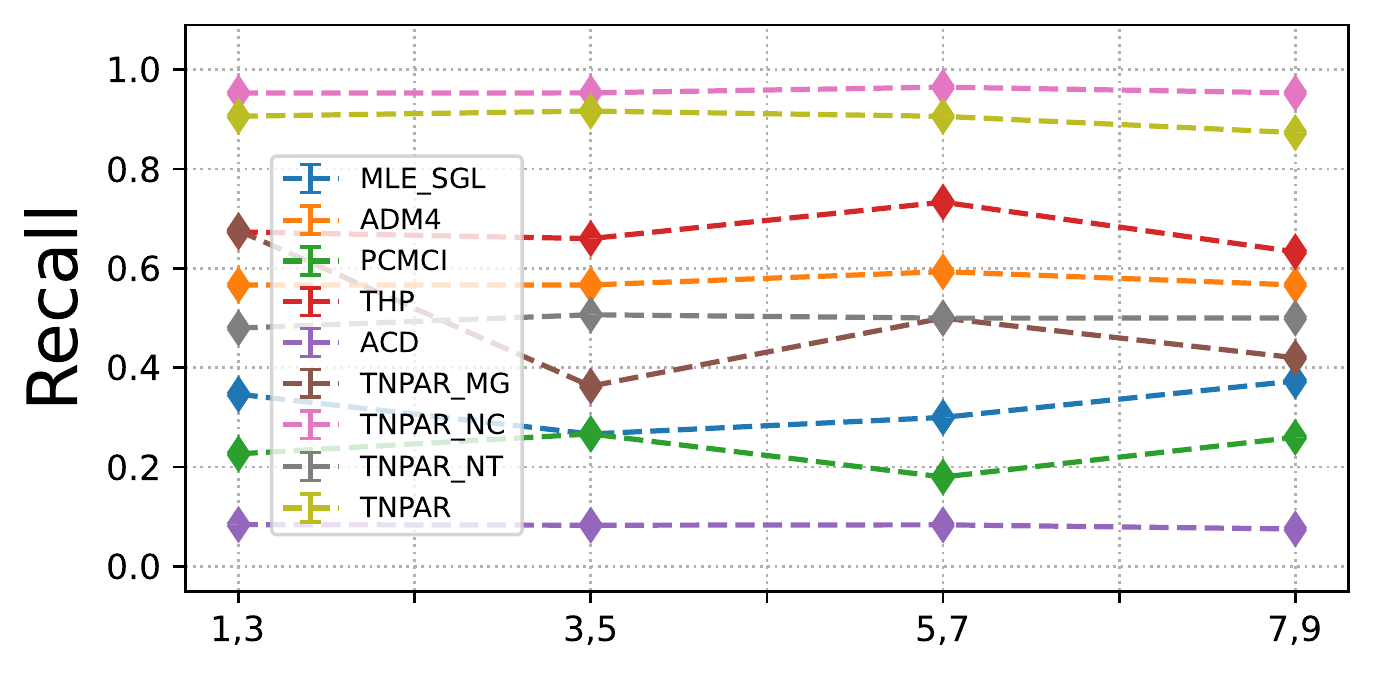}
	\label{appendix fig recall:b}
}
	\subfigure[Sensitivity to Sample Size]{
	\includegraphics[width=0.45\textwidth]{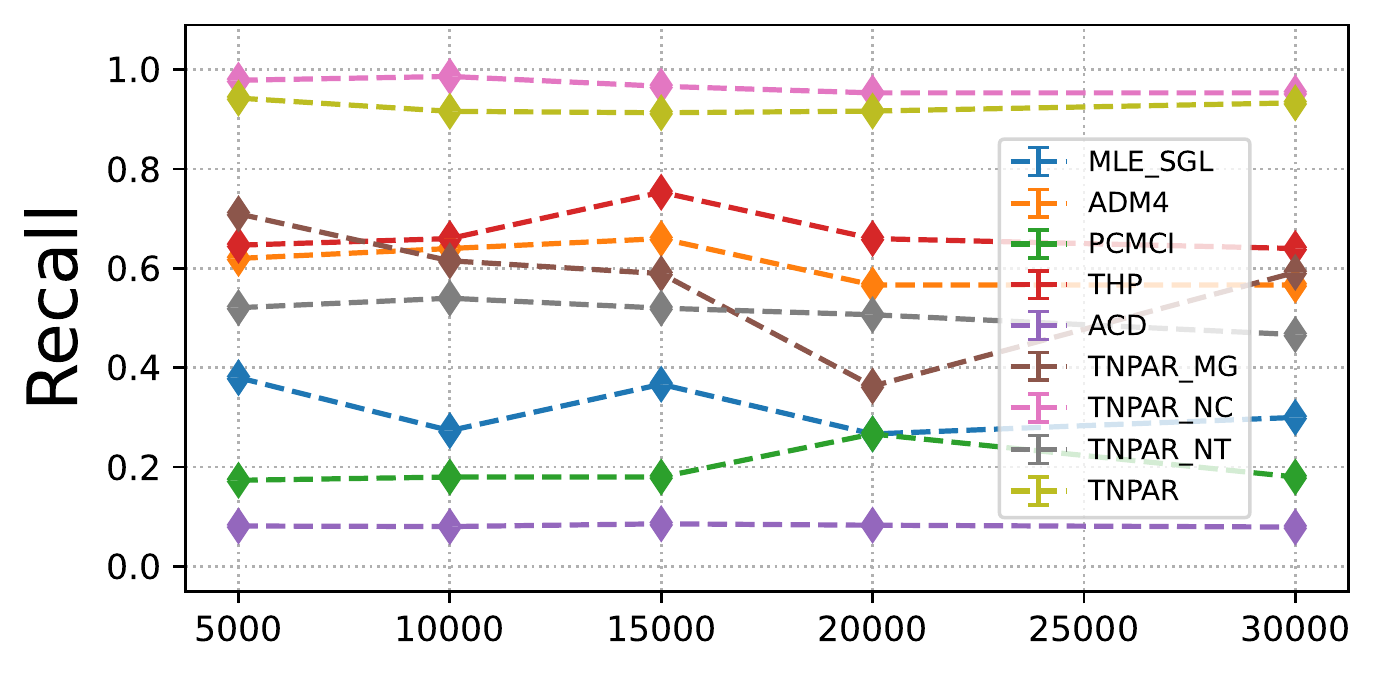}
	\label{appendix fig recall:c}
}
	\subfigure[Sensitivity to $\Delta$]{
	\includegraphics[width=0.45\textwidth]{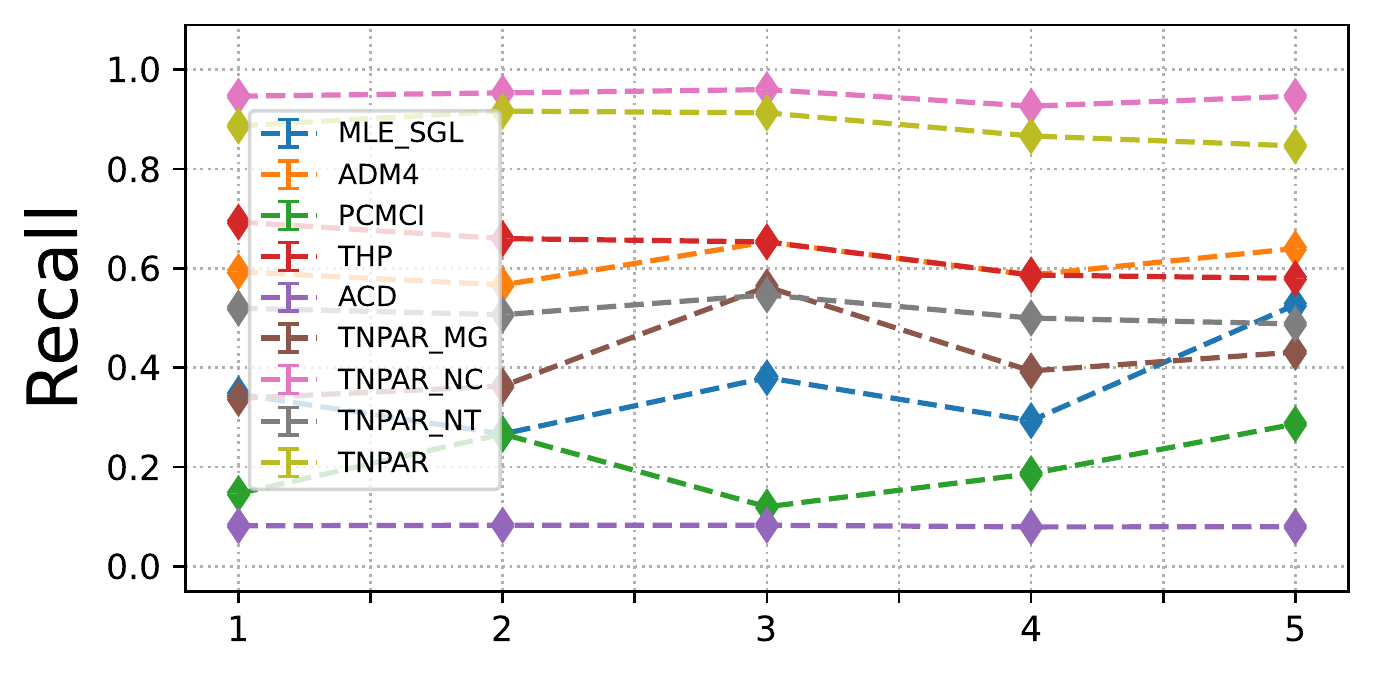}
	\label{appendix fig recall:d}
}
	\subfigure[Sensitivity to Num. of Event Types]{
	\includegraphics[width=0.45\textwidth]{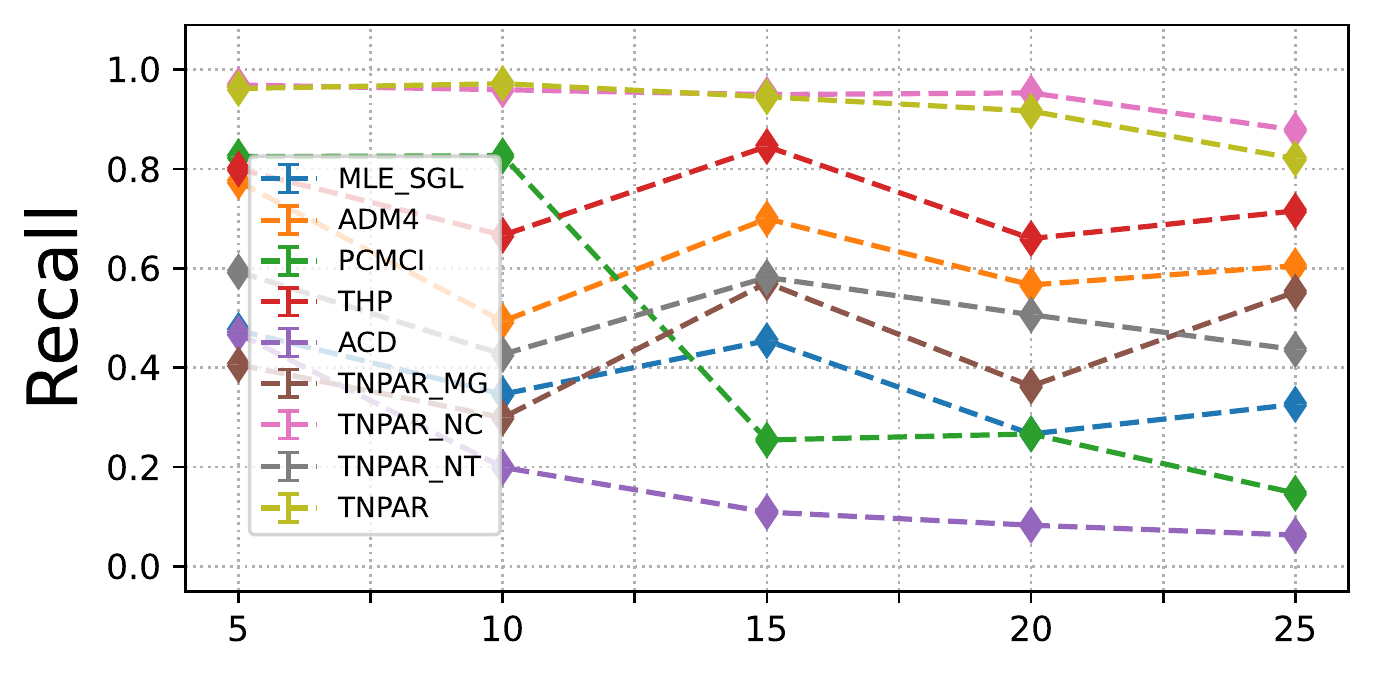}
	\label{appendix fig recall:e}
}
	\subfigure[Sensitivity to Num. of Nodes]{
	\includegraphics[width=0.45\textwidth]{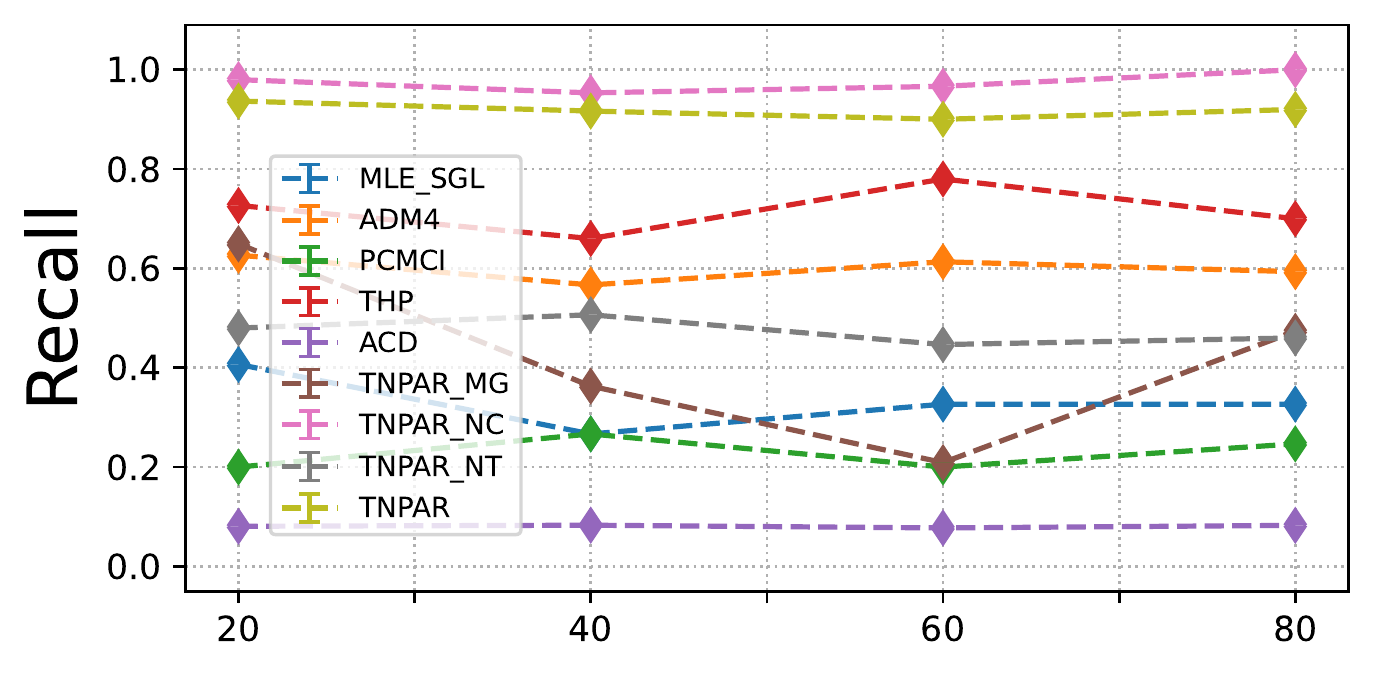}
	\label{appendix fig recall:f}
}
	\caption{Recall on the simulated data}	
	\label{appendix fig: recall}
\end{figure*}

\begin{figure*}[]
	\centering
	\subfigure[Sensitivity to Range of $\alpha (\times 1e-2)$]{
		\includegraphics[width=0.45\textwidth]{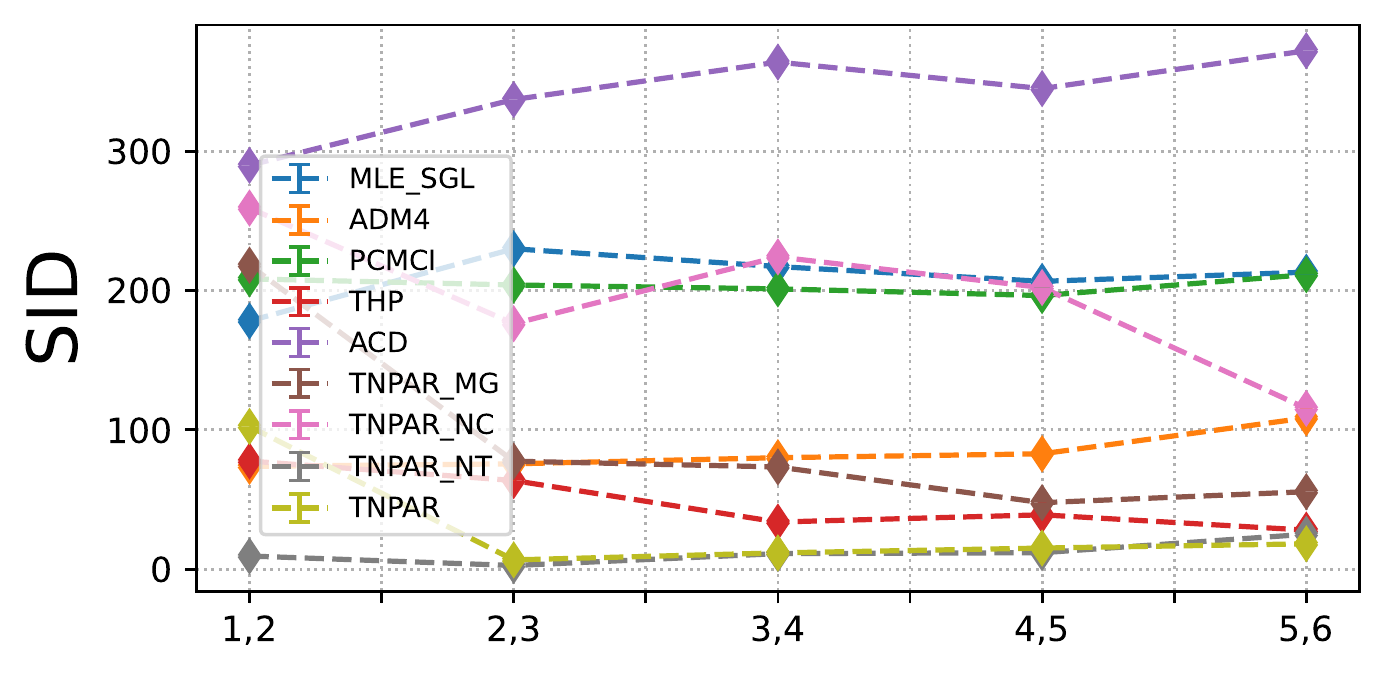}
		\label{appendix fig sid:a}
	}
	\subfigure[Sensitivity to Range of $\mu (\times 1e-5)$]{
	\includegraphics[width=0.45\textwidth]{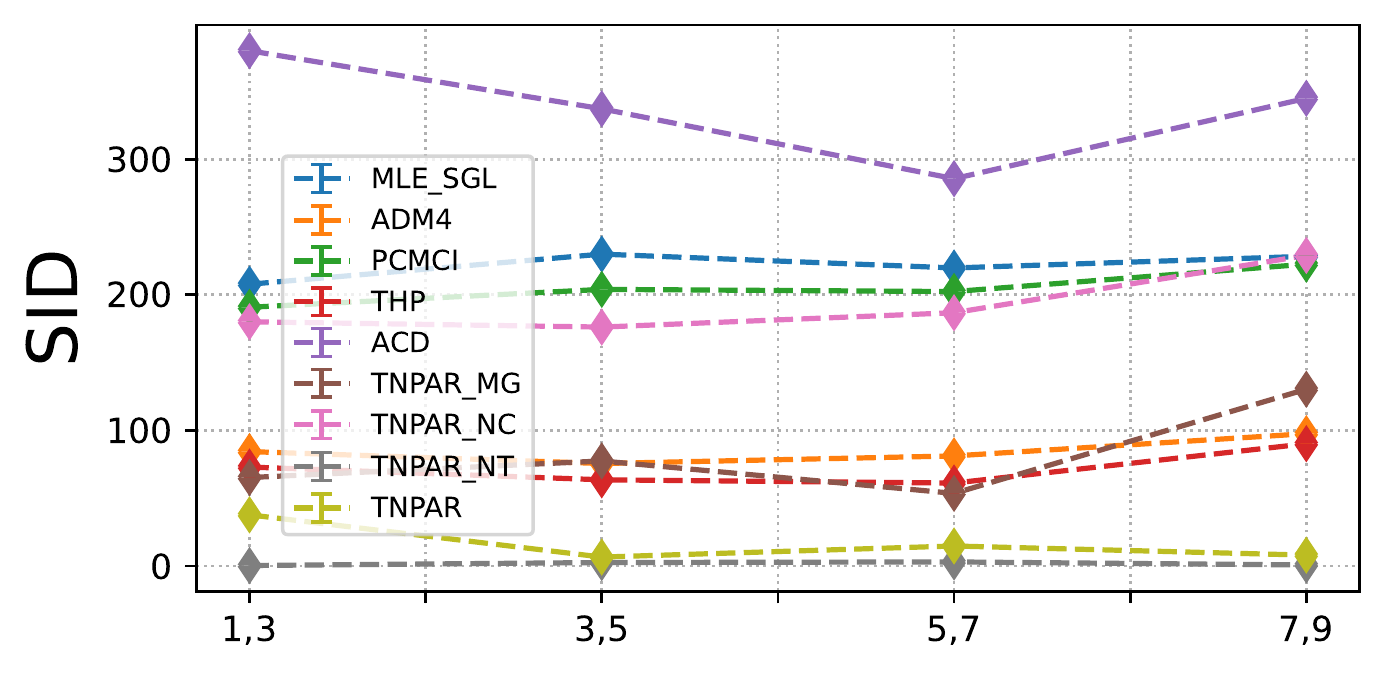}
	\label{appendix fig sid:b}
}
	\subfigure[Sensitivity to Sample Size]{
	\includegraphics[width=0.45\textwidth]{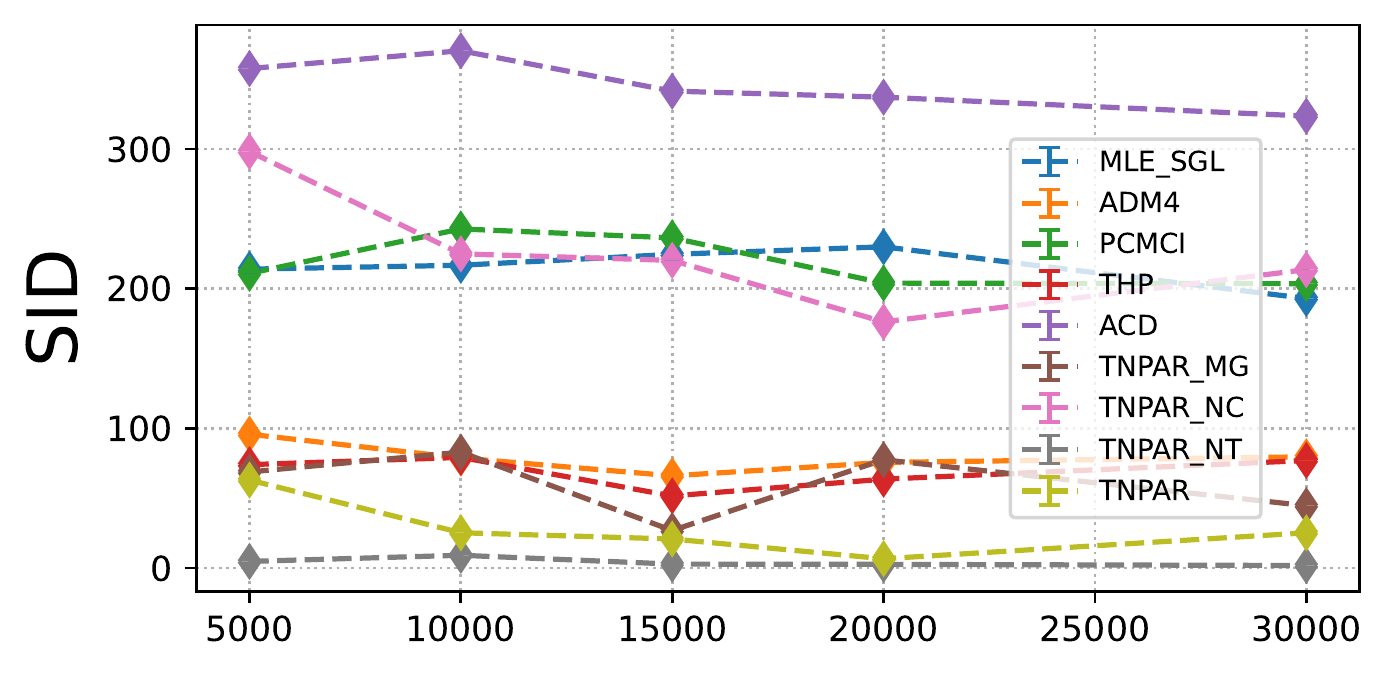}
	\label{appendix fig sid:c}
}
	\subfigure[Sensitivity to $\Delta$]{
	\includegraphics[width=0.45\textwidth]{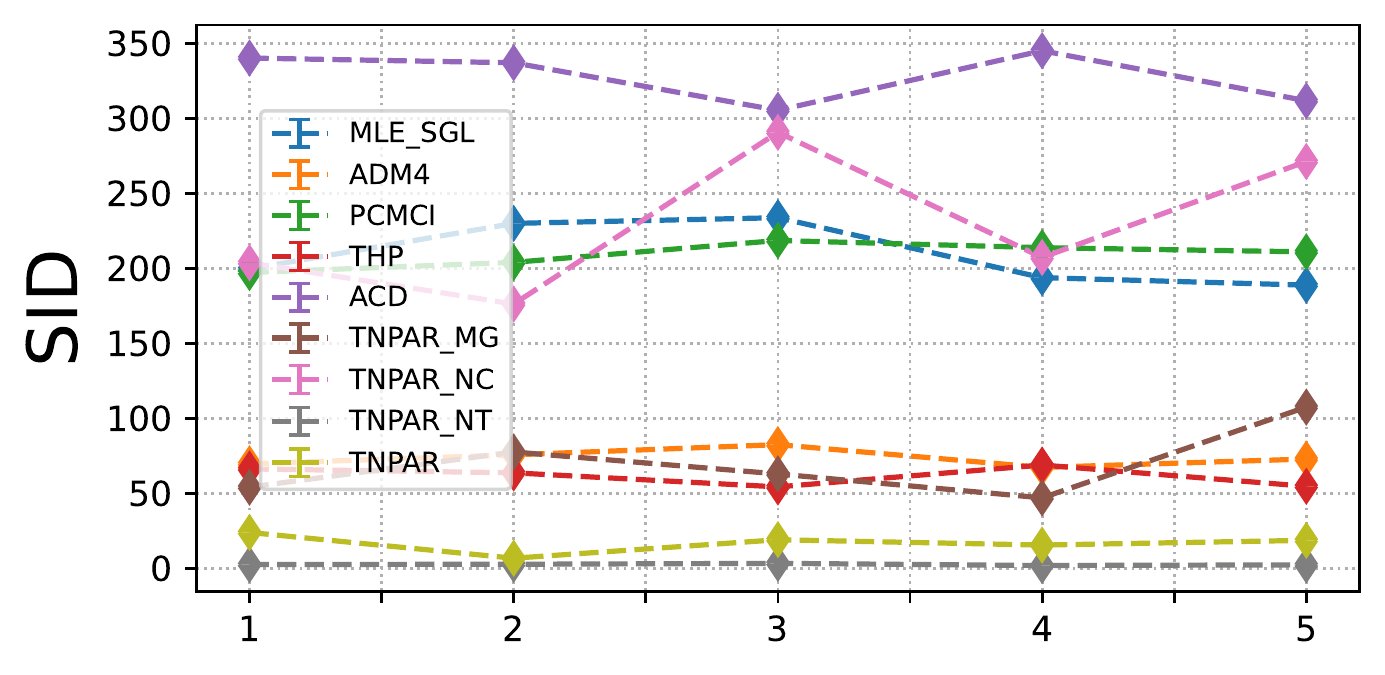}
	\label{appendix fig sid:d}
}
	\subfigure[Sensitivity to Num. of Event Types]{
	\includegraphics[width=0.45\textwidth]{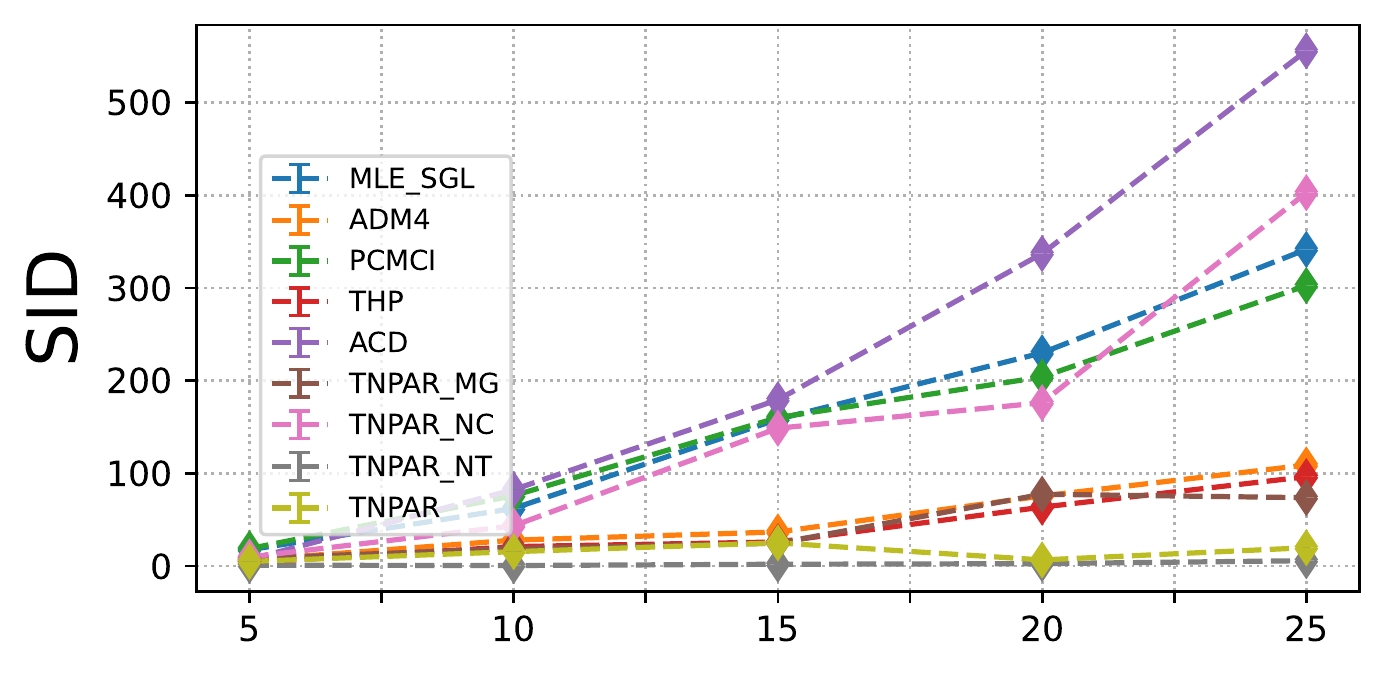}
	\label{appendix fig sid:e}
}
	\subfigure[Sensitivity to Num. of Nodes]{
	\includegraphics[width=0.45\textwidth]{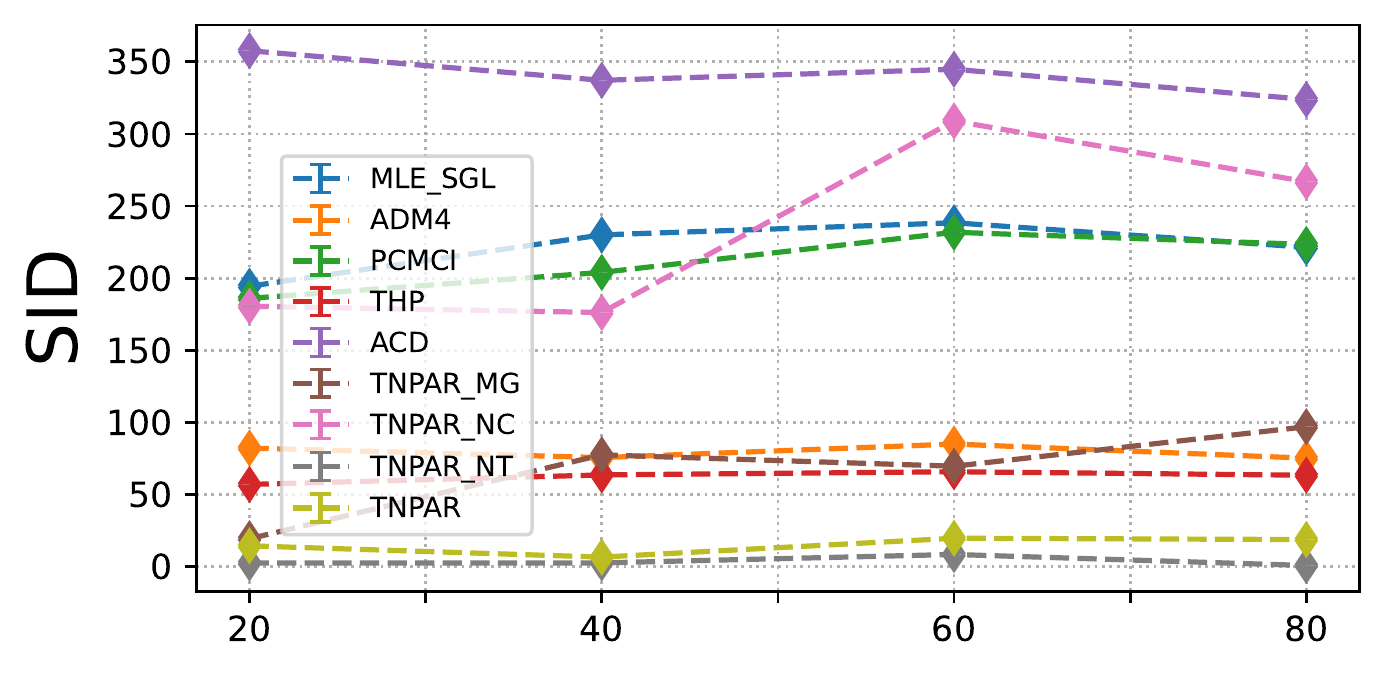}
	\label{appendix fig sid:f}
}
	\caption{SID on the simulated data}	
	\label{appendix fig: sid}
\end{figure*}

\begin{figure*}[]
	\centering
	\subfigure[Sensitivity to Range of $\alpha (\times 1e-2)$]{
		\includegraphics[width=0.45\textwidth]{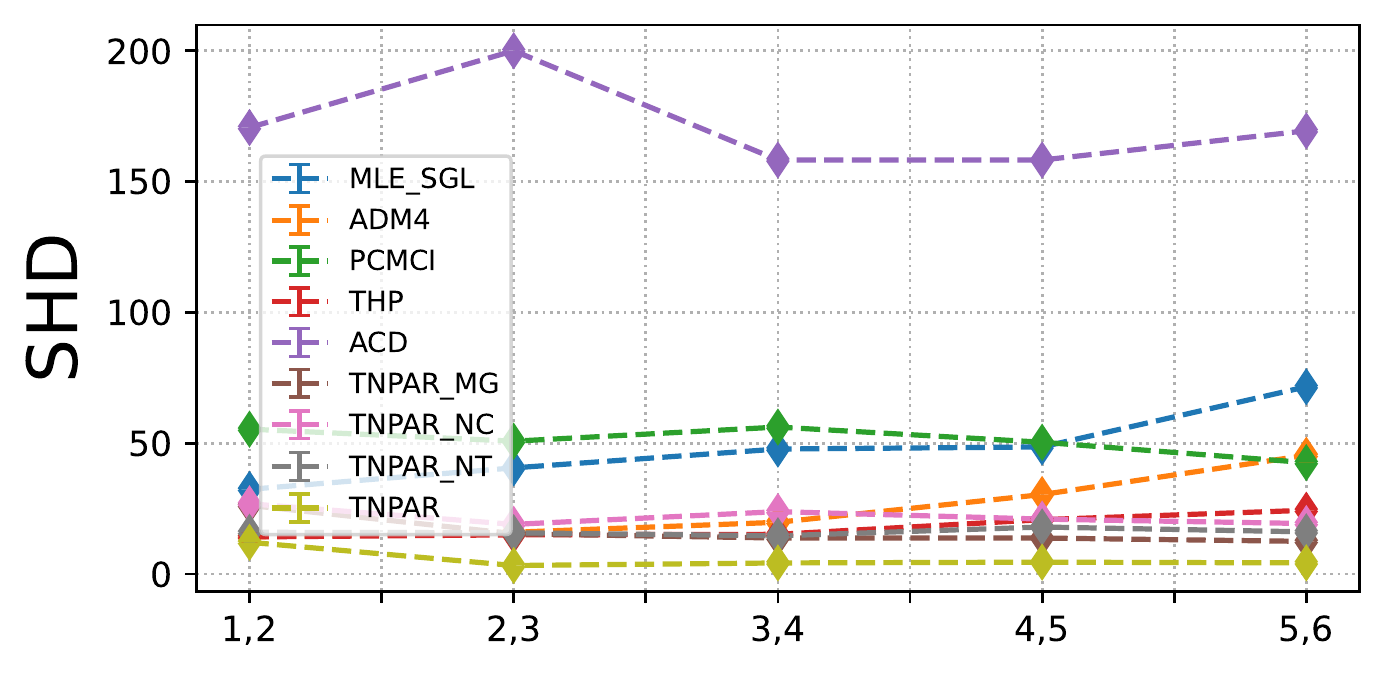}
		\label{appendix fig shd:a}
	}
	\subfigure[Sensitivity to Range of $\mu (\times 1e-5)$]{
	\includegraphics[width=0.45\textwidth]{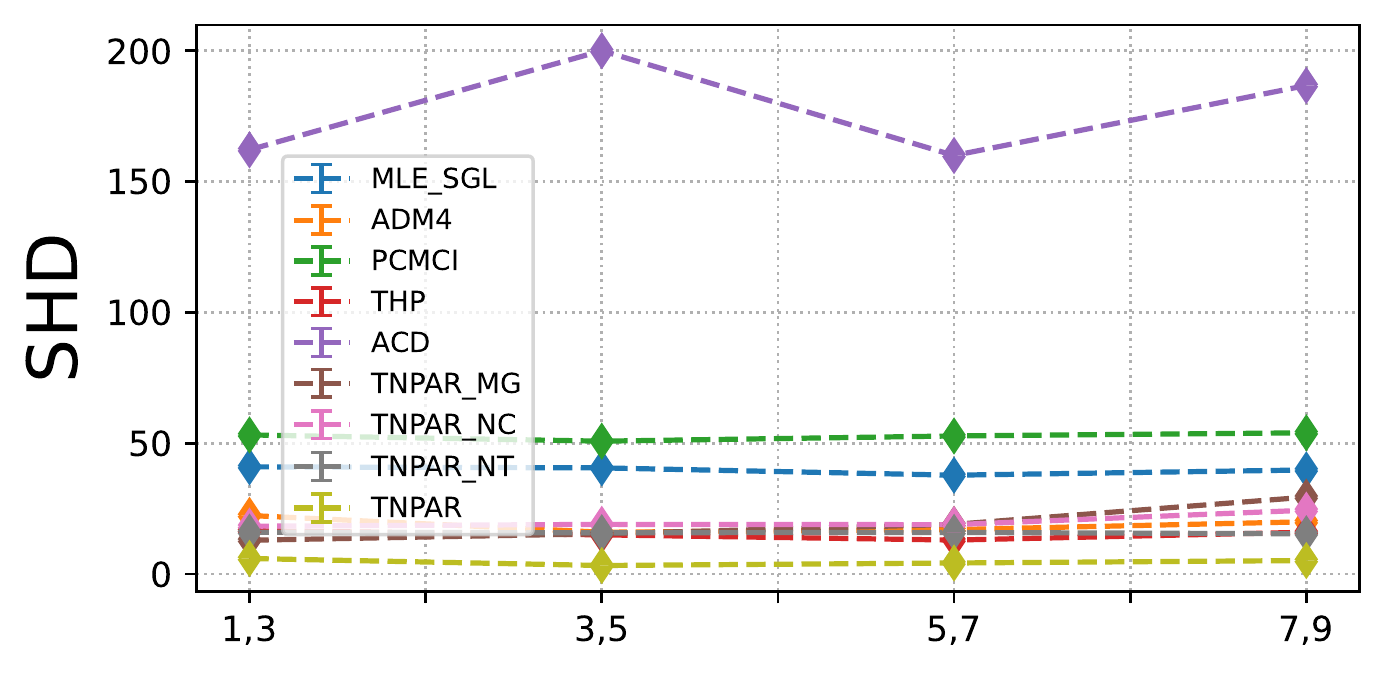}
	\label{appendix fig shd:b}
}
	\subfigure[Sensitivity to Sample Size]{
	\includegraphics[width=0.45\textwidth]{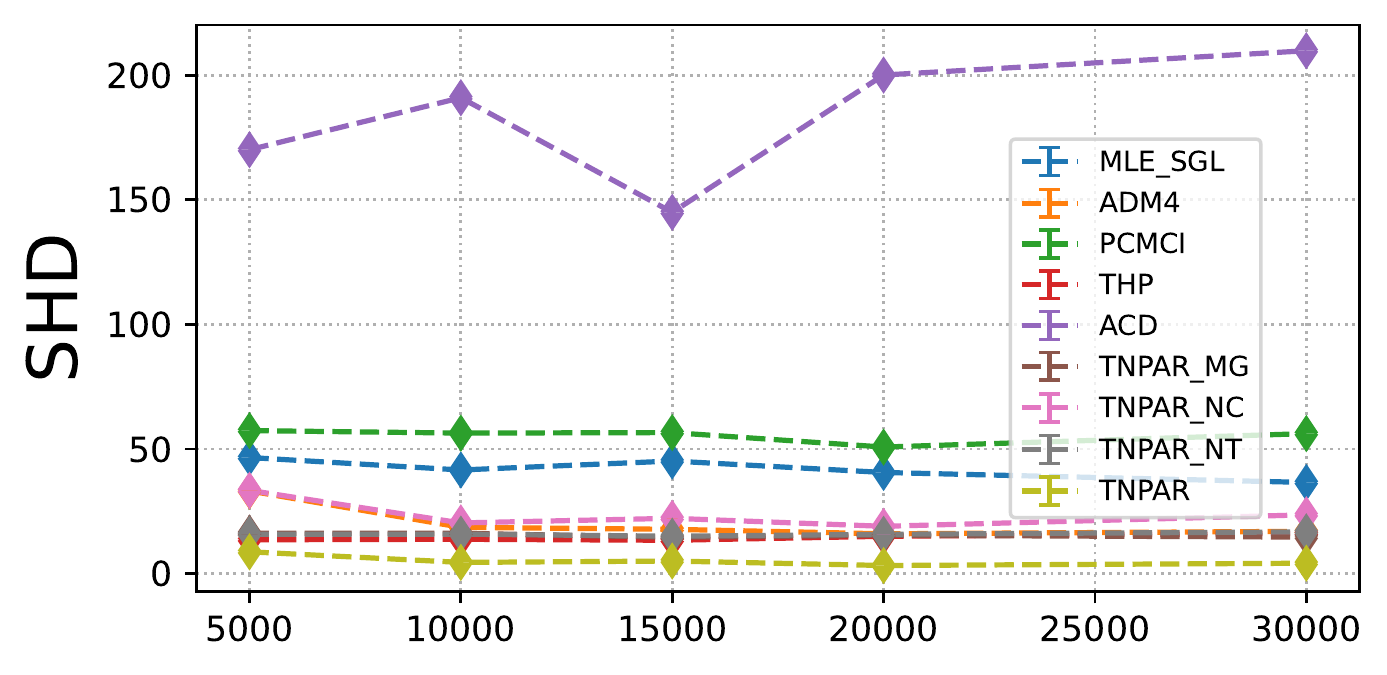}
	\label{appendix fig shd:c}
}
	\subfigure[Sensitivity to $\Delta$]{
	\includegraphics[width=0.45\textwidth]{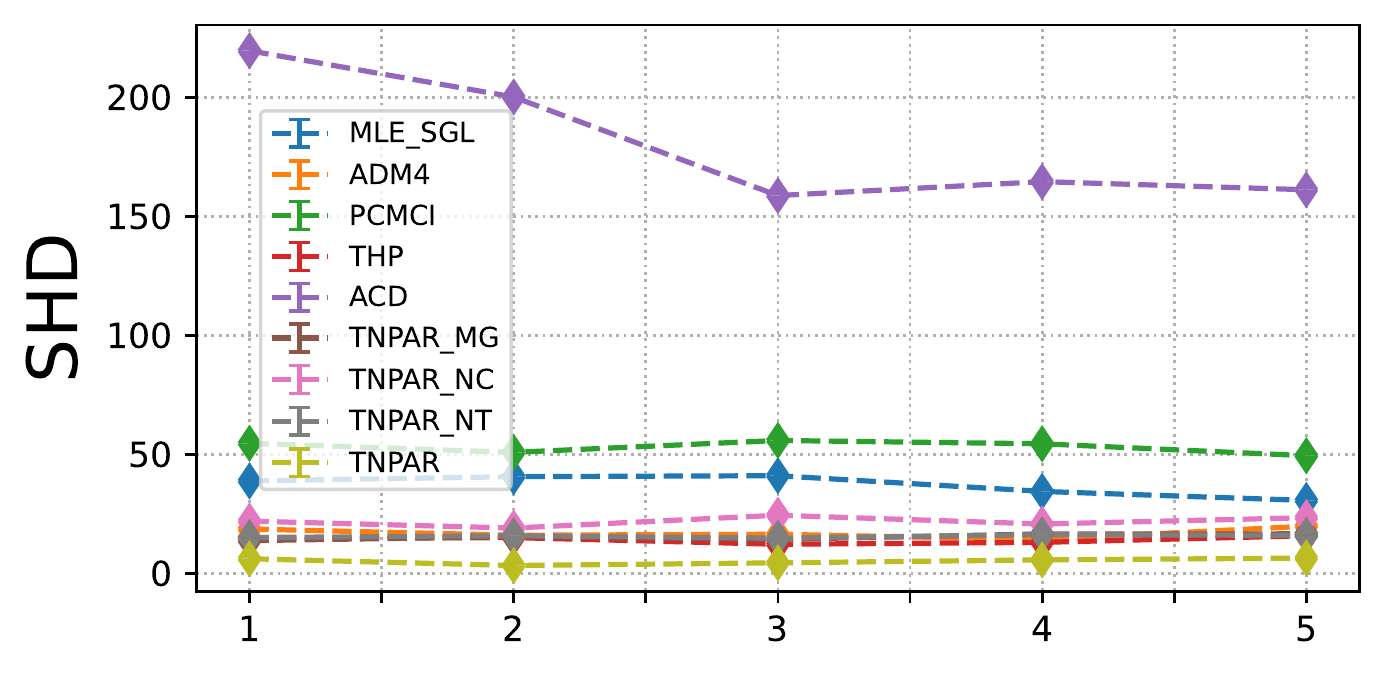}
	\label{appendix fig shd:d}
}
	\subfigure[Sensitivity to Num. of Event Types]{
	\includegraphics[width=0.45\textwidth]{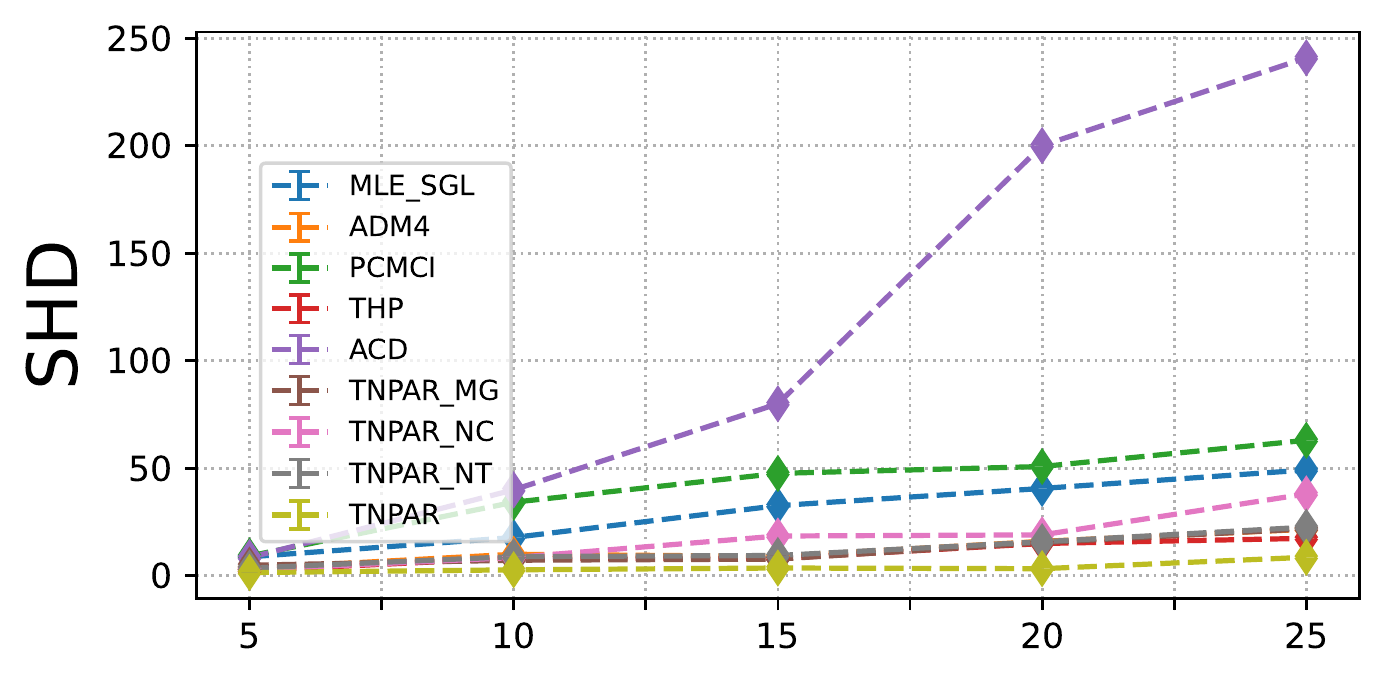}
	\label{appendix fig shd:e}
}
	\subfigure[Sensitivity to Num. of Nodes]{
	\includegraphics[width=0.45\textwidth]{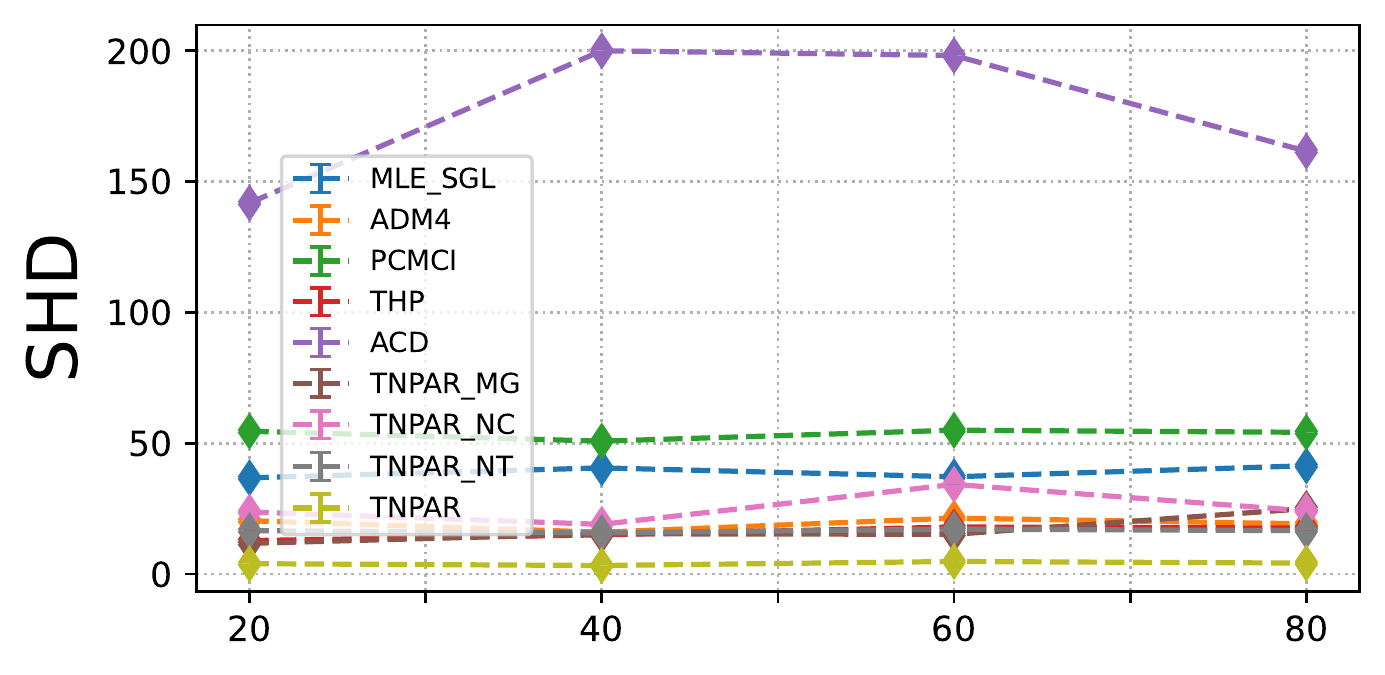}
	\label{appendix fig shd:f}
}
	\caption{SHD on the simulated data}	
	\label{appendix fig: SHD}
\end{figure*}
\end{document}